\documentclass[11pt,a4paper]{article}
\usepackage[margin=3cm]{geometry}
\usepackage{authblk}

\usepackage[T1]{fontenc}    %
\usepackage{microtype}      %
\usepackage{booktabs}
\usepackage{xcolor}
\usepackage{lmodern}

\usepackage{mathtools,amssymb}   %
\mathtoolsset{showonlyrefs,showmanualtags}

\usepackage{mathtools,amssymb}
\usepackage{my_notation}

\usepackage{amsthm,thmtools}

\usepackage{csquotes}
\usepackage[british]{babel}     %

\usepackage{my_citations}   %
\usepackage[hidelinks]{hyperref}
\usepackage{zref-clever}
\let\cref\zcref
\zcsetup{cap,nameinlink=false}                   %

\AddToHook{env/algorithm/begin}{%
   \zcsetup{reftype=algorithm}}

\zcRefTypeSetup{algorithm}{
Name-sg = Algorithm,
name-sg = algorithm,
Name-pl = Algorithms,
name-pl = algorithms,
}
\usepackage{my_theorems}
\usepackage{url}

\usepackage{enumitem}           %

\title{\textbf{Variance-sensitive Thompson sampling\\for generalised linear bandits, revisited}}

\author[1]{Tom Perneczky}
\author[2]{Marc Abeille}
\author[1]{David Janz}
\affil[1]{University of Oxford}
\affil[2]{Criteo AI Lab}

\date{}

\begin{document}
\maketitle

\begin{abstract}
  We prove a variance-sensitive regret bound for Thompson sampling in stochastic generalised linear bandits. The argument assumes a warm-up, after which the regret is controlled through using the Gaussian Poincar\'e inequality. This bypasses the point at which previous optimism-based analyses break down. Removing the warm-up while retaining the same variance-sensitive scaling remains open, and appears nontrivial.
\end{abstract}

\section{Introduction}

Thompson sampling is a classic Bayesian-inspired alternative to upper-confidence bound algorithms for balancing exploration and exploitation in sequential decision-making problems \citep{thompson1933likelihood,scott2010modern,russo2018tutorial}. The method yields promising empirical performance \citep{chapelle2011empirical}, sometimes with substantially better computational complexity, and allowing for natural parallelisation \citep{hernandezlobato2017parallel,kandasamy2018parallelised}. However, while the procedure admits clean guarantees under Bayesian assumptions, its frequentist analysis is complex, even in the simplest unstructured settings \citep{agrawal2012analysis}. Our understanding of Thompson sampling in structured settings is still evolving \citep{abeille2025randomised}.

This manuscript examines the regret of Thompson sampling when interacting with generalised linear bandits \citep{filippi2010parametric}, a setting that includes logistic and Poisson bandits; the results also imply simple first-order bounds for linear bandits with bounded rewards. Early Thompson sampling regret bounds for generalised linear bandits were developed by \citep[Appendix F]{abeille2017linear} and \citet{kveton2020randomized}. Later bounds developed in \citet[Appendix~D.2]{faury2022jointly} and \citet{janz2024evill} promise performance that adapts to the variance of the optimal arm; both of the variance-sensitive proof attempts are, however, flawed.

We rebuild the results of \citet{faury2022jointly,janz2024evill}, pivoting to proofs inspired by the new approach to linear bandits of \citet{abeille2025randomised}. In particular, we use the Gaussian Poincar\'e  inequality \citep{ledoux2001concentration} to control the expected per-step regret, which allows us to successfully recover variance-sensitive regret.

Our proof does, however, rely on a warm start---that is, an a~priori guarantee that we start with a reasonable estimate of the optimum, which may be achieved through algorithmic means, or may be based on prior data. Such warm start conditions were once common in optimism-based algorithms \citep{faury2020improved,russac2021selfconcordant,faury2022jointly}, but were later decisively eliminated \citep{lee2024unified,akhavan2025bernstein}.

Recent work of \citet{liu2026efficient} attains variance-sensitive `simple regret' guarantee with a Thompson sampling-like algorithm in the contextual logistic bandit setting.

\section{Problem setting}

The bandit setting we consider is defined in terms of a natural exponential family.

\paragraph{Natural exponential family.}
Let $Q$ be a non-degenerate probability distribution on $\R$, and let
\[
  \psi_Q(u)=\log\int e^{uy}Q(\dif y)
\]
be its cumulant generating function. Let $\cU\subset\R$ be the nonempty open interval on which $\psi_Q$ is finite. Consider the natural exponential family
\[
  P_u(\dif y)=\exp\roundb[\big]{uy-\psi_Q(u)}Q(\dif y),\qquad u\in\cU.
\]
Write
\[
  \mu(u)=\psi_Q'(u),\qquad \nu(u)=\psi_Q''(u)=\mu'(u),\qquad u\in\cU.
\]
Then $P_u$ has mean $\mu(u)$ and variance $\nu(u)$. Since $Q$ is non-degenerate, $\nu(u)>0$ on $\cU$, and $\mu$ is strictly increasing.

\paragraph{Bandit environment.} Fix a natural exponential family with notation as above. Let $b > 0$ with $[-b, b] \subset \cU$ and fix an unknown parameter $\theta_\star\in\Rd$ with $\norm{\theta_\star}_2\le b$.

The interaction proceeds at step $t=1,2,3,\dots$ with the learner selecting an arm $X_t$ in a compact action set $\cX\subset\Bd$, and observing
\[
  Y_t \sim P_{\langle X_t,\theta_\star\rangle}.
\]
We judge the learner's performance by its incurred $n$-step (pseudo-)regret
\[
  R_n
  =
  \sum_{t=1}^n
  \roundb[\big]{\mu(\langle x_\star,\theta_\star\rangle)-\mu(\langle X_t,\theta_\star\rangle)},
\]
where $x_\star \in \argmax_{x \in \cX} \langle x, \theta_\star\rangle$. While here, we also define
\[
\nu_\star = \nu(\langle \theta_\star, x_\star\rangle)\,.
\]
We seek high-probability upper bounds on $R_n$ with the leading terms scaling as $\sqrt{\nu_\star}$.

\paragraph{Problem-specific quantities.} Let $B > b$ be such that $[-B,B] \subset \cU$; since $[-b,b] \subset \cU$ and $\cU$ is open, such a $B$ exists. Define the following quantities on this expanded interval:
\[
  M = \sup_{|u| \leq B}\frac{|\psi_Q'''(u)|}{\psi_Q''(u)}\,, \qquad L=\sup_{|u|\le B}\nu(u)\vee1,\qquad
  \kappa= \sup_{|u|\le B} \frac{1}{\nu(u)}\vee1.
\]
The above quantities will feature in the algorithm and analysis; therefore, both depend on the choice of $B$. For simplicity, however, we do not optimise over $B$.

\section{Algorithm and main result}
\paragraph{Confidence sequence.}
Let $(\cF_t)_{t\ge0}$ be the data filtration,
\[
  \cF_t=\sigma(X_1,Y_1,\dots,X_t,Y_t),
\]
with $\cF_0$ trivial. The regret analysis below only uses the following statistical input.

\begin{assumption}[Confidence sequence]\label{ass:cs}
Fix $\delta\in(0,1/4)$ and $\lambda\ge1$. Let $(\hat\theta_t)_{t\ge0}$ be an $(\cF_t)$-adapted sequence in $\Rd$, and let
$(\beta_t^\delta)_{t\ge0}$ be a positive nondecreasing $(\cF_t)$-adapted sequence. Assume that there exists a deterministic nondecreasing sequence of real numbers
$(\bar\beta_t^\delta)_{t\ge0}$ such that $\beta_t^\delta\le\bar{\beta}_t^\delta$ almost surely for every $t\ge0$, and that
\[
  \cE^\star
  =
  \curlyb[\Big]{
    \forall\, t\in[n],\
    \norm{\hat\theta_{t-1}-\theta_\star}_{H_{t-1}^\star}
    \le \beta_{t-1}^\delta} \spaced{satisfies} \P{\cE^\star}\ge1-\delta,
\]
where
\[
  H_t^\star
  =
  \lambda I+\sum_{s=1}^t
  \nu(\langle X_s,\theta_\star\rangle)X_sX_s\tran .
\]
\end{assumption}

\begin{remark}
  To satisfy \cref{ass:cs}, we might for example take the unconstrained regularised maximum-likelihood estimator
\[
  \hat\theta_t\in\argmin_{\theta\in\Rd}
  \curlyb[\Bigg]{
    \frac{\lambda}{2}\norm{\theta}_2^2
    -\sum_{s=1}^t\roundb[\Big]{Y_s\langle\theta,X_s\rangle-\psi_Q(\langle\theta,X_s\rangle)}
  },
\]
where $\psi_Q$ is understood to be $+\infty$ outside $\cU$. The corresponding choice of $(\beta_t)_{t \geq 0}$ and $(\bar\beta_t)_{t \geq 0}$ is then given by standard results in the literature \citep[see, e.g.,][]{akhavan2025bernstein}.
\end{remark}

\paragraph{Algorithm.} The algorithm consists of a deterministic warm-up and a main phase:
\begin{enumerate}
  \item \emph{Warm-up.} Fix $\epsilon > 0$. Starting from $V_0=\lambda I$, while
$\max_{x\in\cX}\norm{x}_{V_t^{-1}}^2>\epsilon$ and $t<n$, the algorithm chooses
\[
  X_{t+1}\in\argmax_{x\in\cX}\norm{x}_{V_t^{-1}},
  \qquad
  V_{t+1}=V_t+X_{t+1}X_{t+1}\tran.
\]
Write $\tau$ for the index of the final covariate chosen by the warm-up and set
\[
  G_\tau
  =
  \lambda I+\frac1\kappa\sum_{s=1}^{\tau}X_sX_s\tran.
\]
\item \emph{Thompson sampling.} Let $(\eta_t)_{t=\tau+1}^n$ be independent $\cN(0,I_d)$ random vectors. For $t=\tau+1,\dots,n$, take
\[
  \theta_t=\hat\theta_{t-1}+\gamma\beta_{t-1}^\delta G_{t-1}^{-1/2}\eta_t, \spaced{choose} X_t\in\argmax_{x\in\cX}\langle\theta_t,x\rangle,
\]
and update
\[
  G_t=G_{t-1}+\tilde\nu(\langle\hat\theta_{t-1},X_t\rangle) X_tX_t\tran, \spaced{where} \tilde\nu(u) = \nu((-B) \vee u \wedge B).
\]
\end{enumerate}

\paragraph{Main result.} The following theorem captures the main result of this manuscript.

\newcommand{\epsloc}{\epsilon_{\rm loc}}

\begin{theorem}[Regret bound]\label{thm:glm-gaussian-sampling-regret}
Fix $\delta\in(0,1/4)$, $\lambda\ge1$, $n \geq 1$, and suppose \cref{ass:cs} holds. Define
\[
  r_{\rm loc}:=\frac{B-b}{2}\wedge\frac1M,
  \qquad
  \Gamma_\delta=\sqrt d+\sqrt{2\log(n/\delta)},
  \qquad
  \ell_n=\log\roundb[\big]{1+\frac{nL}{\lambda d}},
\]
where $1/M = \infty$ if $M=0$. Suppose we run the algorithm above with $\gamma = 4$ and warm-up parameter
\[
  \epsloc
  =
  \frac{1}{\kappa L}
  \wedge
  \frac{r_{\rm loc}^2}{4\kappa(\bar{\beta}_{n-1}^\delta)^2 (1 \vee \gamma\Gamma_\delta)^2},
\]
where the second term in the minimum is omitted if $\bar\beta_{n-1}^\delta=0$.
Then, with probability at least $1-4\delta$,
\[
  R_n
  \le
  \Delta\tau
  +300\,\beta_{n-1}^\delta\Gamma_\delta
  \sqrt{\nu_\star nd\,\ell_n}
  +80\,\nu_\star
  \roundb[\Big]{\frac{\beta_{n-1}^\delta}{\sqrt{\lambda}}+r_{\rm loc}}
  \sqrt{n\log(1/\delta)},
\]
where $\Delta = \sup_{u,u' \in [-b,b]} \mu(u)-\mu(u')$. Moreover, the length of the warm-up satisfies
\[
\tau \leq
    \frac{4d}{\epsloc}
    \log\roundb[\big]{1+\frac{4d}{\epsloc\lambda}}
    +1.
\]
\end{theorem}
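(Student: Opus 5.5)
We split the regret into the warm-up contribution, bounded trivially by $\Delta\tau$, and the Thompson-sampling phase. The bound on $\tau$ is a standard elliptical-potential argument. At each warm-up step $\norm{X_{t+1}}_{V_t^{-1}}^2>\epsloc$. Summing $\log(1+\norm{X_{t+1}}^2_{V_t^{-1}})\ge \log(1+\epsloc)$, and using that $\min(1,\epsloc)$ times the count is at most $2d\log(1+\tau/(d\lambda))$, gives an implicit inequality $\tau\lesssim (d/\epsloc)\log(1+\tau/(d\lambda))$. Inverting it with the usual lemma ($x\le a\log(1+x/c)\Rightarrow x\le 2a\log(1+2a/c)$) yields the stated $\frac{4d}{\epsloc}\log(1+\frac{4d}{\epsloc\lambda})+1$.

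After warm-up, every $x\in\cX$ has $\norm{x}^2_{V_\tau^{-1}}\le\epsloc$. Since $G_\tau\succeq V_\tau/\kappa$ (because $\lambda\ge\lambda/\kappa$) and $G_t$ only grows, we get $\norm{x}_{G_{t-1}^{-1}}^2\le\kappa\epsloc$. We then work on the intersection of three events: $\cE^\star$; the event that $\norm{\eta_t}\le\Gamma_\delta$ for all $t$ (Gaussian norm concentration plus a union bound); and two martingale concentration events used below. This accounts for the $1-4\delta$ probability.

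The next step is localisation. We show $G_{t-1}\succeq c H^\star_{t-1}$ for a constant $c$. The warm-up choice $\epsloc\le r^2_{\rm loc}/(4\kappa\bar\beta^2(\gamma\Gamma_\delta)^2)$ ensures that for every $x$, both $|\langle x,\hat\theta_{t-1}-\theta_\star\rangle|$ and $|\langle x,\theta_t-\theta_\star\rangle|$ are at most $r_{\rm loc}$. Since $r_{\rm loc}\le 1/M$, self-concordance $|\nu'|\le M\nu$ gives $\nu(u)/\nu(u')\in[e^{-1},e]$ whenever $|u-u'|\le r_{\rm loc}$, and the clipping in $\tilde\nu$ is inactive on this range. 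Hence $\tilde\nu(\langle\hat\theta_{t-1},X_t\rangle)\asymp\nu(\langle\theta_\star,X_t\rangle)$, and the initial block satisfies $X_sX_s^\top/\kappa\preceq\nu X_sX_s^\top$. Combining these gives $G_{t-1}\succeq e^{-1}H^\star_{t-1}$ and conversely $H^\star_{t-1}\gtrsim G_{t-1}$. The same argument shows that $\nu$ at any relevant point near $x_\star$ is within a constant of $\nu_\star$.

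The core step, and the expected obstacle, is bounding the per-step expected regret via the Gaussian Poincaré inequality, following \citet{abeille2025randomised}. Let $J(\theta)=\max_x\langle x,\theta\rangle$, which is convex with gradient $x(\theta)$. Write $\theta_t=\hat\theta_{t-1}+\sigma_t G^{-1/2}_{t-1}\eta$ with $\sigma_t=\gamma\beta_{t-1}$. Using $\mu$'s local Lipschitz constant $\approx\nu_\star$ (localisation), we bound
\[\mu(\langle x_\star,\theta_\star\rangle)-\mu(\langle X_t,\theta_\star\rangle)\lesssim\nu_\star\big(J(\theta_\star)-\langle X_t,\theta_\star\rangle\big).\]
We then split $J(\theta_\star)-\langle X_t,\theta_\star\rangle$ into $J(\theta_\star)-J(\theta_t)$ plus $\langle X_t,\theta_t-\theta_\star\rangle$. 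The second term is at most $(1+\gamma\Gamma_\delta)\beta\norm{X_t}_{G^{-1}}$ up to constants.

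For the first term, take the conditional expectation of $J(\theta_t)$ and compare it to $J(\theta_\star)$. By convexity and anti-concentration, with $\gamma=4$, $\theta_t$ is optimistic with constant probability $p$. Hence $J(\theta_\star)-\E_t J(\theta_t)\le \E_t[(J(\theta_\star)-J(\theta_t))_+]$, and this is bounded through the variance: $\mathrm{Var}_t J(\theta_t)\le\sigma_t^2\E_t\norm{X_t}^2_{G^{-1}}$ by Poincaré. This yields $J(\theta_\star)-\E_tJ(\theta_t)\lesssim\sigma_t\sqrt{\E_t\norm{X_t}^2_{G^{-1}}}/p$.

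The hard part is obtaining $\sqrt{\nu_\star}$ rather than $\nu_\star\sqrt\kappa$. One must use $\norm{X_t}^2_{G^{-1}}$ with $G$ weighted by $\tilde\nu(\langle\hat\theta,X_t\rangle)$. The trick is to rewrite $\nu_\star\norm{X_t}_{G^{-1}}\lesssim\sqrt{\nu_\star}\,\sqrt{\tilde\nu_t}\norm{X_t}_{G^{-1}}$, plus an error term. This uses that $\nu_t\le\nu_\star+L'|\langle x_\star-X_t,\theta_\star\rangle|$, i.e. $\nu$ near the optimum is controlled by the regret itself (again via self-concordance). A quadratic inequality in the regret then absorbs the correction. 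Finally we pass from conditional expectations to realised regret with a Freedman/Azuma inequality, which produces the $\nu_\star(\beta/\sqrt\lambda+r_{\rm loc})\sqrt{n\log(1/\delta)}$ term since increments are bounded by $\nu_\star\cdot(\beta/\sqrt\lambda+r_{\rm loc})$ on the localised event. We then sum with Cauchy–Schwarz and the elliptical potential lemma applied to $\sum\tilde\nu_t\norm{X_t}^2_{G^{-1}_{t-1}}\le 2d\ell_n$; this step needs $\tilde\nu_t\norm{X_t}^2_{G^{-1}}\le L\kappa\epsloc\le1$, which is where $\epsloc\le 1/(\kappa L)$ enters. Collecting constants yields the $300\beta\Gamma_\delta\sqrt{\nu_\star nd\ell_n}$ term.
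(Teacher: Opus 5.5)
Your architecture matches the paper's: warm-up length via the determinant argument, localisation, the split $u_\star-u_t^\star=(u_\star-u_t)+(u_t-u_t^\star)$, constant-probability optimism, a variance-based shortfall bound, and Poincar\'e. The step you single out as the hard part is where the proposal breaks.

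As written, $\nu_t\le\nu_\star+L'|\langle x_\star-X_t,\theta_\star\rangle|$ points the wrong way. To pass from $\nu_\star\|X_t\|_{G_{t-1}^{-1}}$ to $\sqrt{\nu_\star\tilde\nu_t}\,\|X_t\|_{G_{t-1}^{-1}}$ you need a \emph{lower} bound on $\tilde\nu_t$ in terms of $\nu_\star$. Even with the direction repaired, an additive comparison followed by quadratic absorption leaves an additional lower-order term that does not appear in the stated bound.

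No such trick is needed. You already remark that $\nu$ is within a constant of $\nu_\star$ at relevant points; applied to the clipped fitted parameter, that is the whole argument. On $\mathcal{E}_{\rm loc}$, non-expansiveness of clipping gives $|\tilde u_t-u_\star|\le|\hat u_t-u_t|+|u_t-u_\star|\le 3r_{\rm loc}/2$. Since $Mr_{\rm loc}\le1$, this yields the multiplicative bound $\tilde\nu_t\ge e^{-2}\nu_\star$, so
\[
  \sum_{t>\tau}\|X_t\|^2_{G_{t-1}^{-1}}\le\frac{e^2}{\nu_\star}\sum_{t>\tau}\tilde\nu_t\|X_t\|^2_{G_{t-1}^{-1}}\le\frac{2e^2d\,\ell_n}{\nu_\star}.
\]
Multiplying by the order-$\nu_\star$ factor from the local Lipschitz bound and using Cauchy--Schwarz gives $\sqrt{\nu_\star nd\ell_n}$ for both the sampling error and the optimism deficit, with no error term. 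This is why $\epsilon_{\rm loc}$ carries the factor $(1\vee\gamma\Gamma_\delta)^{-2}$: the sampled $\theta_t$, not only $\hat\theta_{t-1}$, must be localised.

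Your probabilistic bookkeeping also needs reordering. Poincar\'e bounds $\mathrm{Var}_{t-1}(u_t)$ by $\gamma^2(\beta_{t-1}^\delta)^2\,\mathbb{E}_{t-1}\|X_t\|^2_{G_{t-1}^{-1}}$, a conditional expectation over $\eta_t$. The bound $\tilde\nu_t\ge e^{-2}\nu_\star$ holds only on the realised event $\|\eta_t\|_2\le\Gamma_\delta$, which is not $\mathcal{F}_{t-1}$-measurable, so you cannot insert $\tilde\nu_t$ inside $\mathbb{E}_{t-1}$. The paper first applies Freedman to $\ell_t=\|X_t\|^2_{G_{t-1}^{-1}}\in[0,1/\lambda]$, giving $\sum_t\mathbb{E}_{t-1}\ell_t\le\frac43\sum_t\ell_t+\frac{16}{3\lambda}\log(1/\delta)$. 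Only then does it use localisation and the elliptical potential on the realised sum. That step, not a bound on regret increments, produces the $\nu_\star\beta_{n-1}^\delta\sqrt{n\log(1/\delta)/\lambda}$ term.

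The $r_{\rm loc}$ term comes from Azuma applied to the truncated shortfall $(u_\star-u_t)_+\wedge r_{\rm loc}$. The truncation is necessary because increments are bounded only on $\mathcal{E}_{\rm loc}$, while Azuma requires almost-sure bounds.

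Finally, $G_{t-1}\succeq e^{-1}H^\star_{t-1}$ is false in general, since the warm-up block of $G_{t-1}$ has weight $1/\kappa$ rather than $\nu(u_s^\star)$. Only $G_{t-1}\preceq eH^\star_{t-1}$ holds, and that is the direction your optimism and confidence steps actually use.
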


\begin{remark}[The need for a warm-up]
  The regret bound requires a warm-up. It can be eliminated by replacing the sampling matrix $G_t$ at each step with
\[
  \hat G_t = \lambda I + \sum_{j=1}^t \tilde{\nu}(\langle \hat\theta_{t}, X_j \rangle) X_j X_j\tran.
\]
However, the matrix $\hat G_t$ admits no rank-one update, and we consider it essential that the algorithm only uses quantities that can be computed in a streaming, online manner. Recent promising work on upper confidence bound-based approaches to generalised linear bandits provide online mirror descent-based ellipsoids \citep{zhang2026generalized}; however, these appear too weak to make our Thompson sampling proof go through.
\end{remark}

\section{Proof of main result}

For $t>\tau$, write
\[
  u_\star=\langle x_\star,\theta_\star\rangle,\qquad
  u_t^\star=\langle X_t,\theta_\star\rangle,\qquad
  \hat u_t=\langle X_t,\hat\theta_{t-1}\rangle,\qquad
  u_t=\langle X_t,\theta_t\rangle.
\]
We also write
\[
  \tilde u_t=(-B)\vee \hat u_t\wedge B,
  \qquad
  \tilde\nu_t=\nu(\tilde u_t),
  \qquad t>\tau,
\]
and define for all $t \geq 1$,
\[
  V_{t-1} = \lambda I + \sum_{s=1}^{t-1} X_s X_s\tran.
\]

The next lemma, established over the course of \cref{appendix:glm-warmup}, gives guarantees on the localisation of the above-defined quantities after the warm-up.

\begin{lemma}[name=Warm-up guarantee,restate=warmupLemma]\label{lem:warm-up}
Let $\gamma>0$. Assume that
\[
  \max_{x\in\cX}\norm{x}_{V_\tau^{-1}}^2
  \le
  \epsloc.
\]
Then there exists an event $\cE_{\rm loc}\subset\cE^\star$ such that $\P{\cE_{\rm loc}}\ge1-2\delta$ and, on
$\cE_{\rm loc}$,
\[
  \max_{1\le s\le n}\norm{\eta_s}_2\le \Gamma_\delta,
\]
and for every $t>\tau$,
\[
  u_t\in[u_\star-r_{\rm loc},u_\star+r_{\rm loc}],
  \qquad
  u_t^\star\in[u_\star-2r_{\rm loc},u_\star],
\]
while
\[
  |\hat u_t-u_t^\star|\le \frac{r_{\rm loc}}{2},
  \qquad
  |u_t-\hat u_t|\le \frac{r_{\rm loc}}{2}.
\]
\end{lemma}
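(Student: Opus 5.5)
We define $\cE_{\rm loc}=\cE^\star\cap\cE_\eta$, where
\[
  \cE_\eta=\curlyb[\big]{\max_{1\le s\le n}\norm{\eta_s}_2\le\Gamma_\delta}.
\]
By Gaussian concentration for the $1$-Lipschitz map $\eta\mapsto\norm{\eta}_2$, together with $\E\norm{\eta_s}_2\le\sqrt d$, each $\eta_s$ satisfies $\P{\norm{\eta_s}_2>\sqrt d+\sqrt{2\log(n/\delta)}}\le\delta/n$. A union bound over at most $n$ indices then gives $\P{\cE_\eta^c}\le\delta$. With \cref{ass:cs} this yields $\P{\cE_{\rm loc}}\ge1-2\delta$. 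The remaining claims are deterministic consequences on $\cE_{\rm loc}$.

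The first step is to compare the matrices. Since $G_{t-1}\succeq G_\tau$ for $t>\tau$ (the added weights $\tilde\nu$ are positive), and $G_\tau\succeq \kappa^{-1}V_\tau$ because $\lambda\ge\lambda/\kappa$, we get $G_{t-1}^{-1}\preceq\kappa V_\tau^{-1}$. Similarly, $H_{t-1}^\star\succeq\kappa^{-1}V_{t-1}\succeq\kappa^{-1}V_\tau$, since $|\langle X_s,\theta_\star\rangle|\le b\le B$ gives $\nu\ge1/\kappa$. So for any $x\in\cX$ and $t>\tau$,
\[
  \norm{x}_{G_{t-1}^{-1}},\ \norm{x}_{(H^\star_{t-1})^{-1}}\le\sqrt{\kappa\epsloc}\le\frac{r_{\rm loc}}{2\bar\beta_{n-1}^\delta(1\vee\gamma\Gamma_\delta)},
\]
using the second term in the definition of $\epsloc$.

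The second step bounds the perturbations. By Cauchy--Schwarz on $\cE^\star$, and using $\beta_{t-1}^\delta\le\bar\beta_{t-1}^\delta\le\bar\beta_{n-1}^\delta$,
\[
  |\langle x,\hat\theta_{t-1}-\theta_\star\rangle|\le\beta_{t-1}^\delta\norm{x}_{(H_{t-1}^\star)^{-1}}\le r_{\rm loc}/2
\]
for every $x\in\cX$. In the same way, $|\langle x,\theta_t-\hat\theta_{t-1}\rangle|=\gamma\beta_{t-1}^\delta|\langle G_{t-1}^{-1/2}x,\eta_t\rangle|\le\gamma\beta_{t-1}^\delta\Gamma_\delta\norm{x}_{G_{t-1}^{-1}}\le r_{\rm loc}/2$. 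Taking $x=X_t$ gives the two bounds $|\hat u_t-u_t^\star|\le r_{\rm loc}/2$ and $|u_t-\hat u_t|\le r_{\rm loc}/2$. By the triangle inequality, $|\langle x,\theta_t-\theta_\star\rangle|\le r_{\rm loc}$ for all $x\in\cX$.

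The third step localises $u_t$ and $u_t^\star$. Optimality of $X_t$ for $\theta_t$ gives $u_t\ge\langle x_\star,\theta_t\rangle\ge u_\star-r_{\rm loc}$. Optimality of $x_\star$ for $\theta_\star$ gives $u_t\le u_t^\star+r_{\rm loc}\le u_\star+r_{\rm loc}$. Finally $u_t^\star\le u_\star$ by the definition of $x_\star$, and $u_t^\star\ge u_t-r_{\rm loc}\ge u_\star-2r_{\rm loc}$.

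The only delicate points are bookkeeping. The case $\bar\beta_{n-1}^\delta=0$ is trivial, because then all deviations vanish. The $1\vee$ in the definition of $\epsloc$ covers both the estimation error and the sampling error with one bound. The first term $1/(\kappa L)$ of $\epsloc$ is not needed here; it is presumably used later. The main thing to verify carefully is the Gaussian tail constant that gives $\Gamma_\delta$.
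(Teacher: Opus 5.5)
Your proposal is correct and matches the paper's proof. The paper also takes $\cE_{\rm loc}=\cE^\star\cap\cE_\eta$ with the same Gaussian-norm union bound, bounds $|\langle x,\hat\theta_{t-1}-\theta_\star\rangle|$ and $|\langle x,\theta_t-\hat\theta_{t-1}\rangle|$ by $r_{\rm loc}/2$ via Cauchy--Schwarz and $G_{t-1},H^\star_{t-1}\succeq\kappa^{-1}V_{t-1}$, and localises $u_t,u_t^\star$ by the same two optimality arguments; the only cosmetic difference is that you pass through $G_\tau$ and $\bar\beta^\delta_{n-1}$ directly, whereas the paper uses monotonicity of $V_s^{-1}$ and the random bound $\beta^\delta_{n-1}\le\bar\beta^\delta_{n-1}$.
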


In particular, on the above event supplied by the preceeding lemma (\cref{lem:warm-up}), the sampled natural parameters
$u_t$ and $u_t^\star$ lie in $[-B,B]$ for every $t>\tau$. The fitted natural parameter $\hat u_t$ is used only through its clipped version $\tilde u_t\in[-B,B]$.

On that event, we have the following decomposition
\[
\begin{aligned}
R_n
&\le
\tau\Delta
+\sum_{t=\tau+1}^n\roundb[\big]{\mu(u_\star)-\mu(u_t)}
+\sum_{t=\tau+1}^n\roundb[\big]{\mu(u_t)-\mu(u_t^\star)}.
\end{aligned}
\]
The first term is the deterministic warm-up cost, the middle term is the optimism deficit, and the last term is the sampling error.

We will repeatedly use the following elementary consequence of the definition of $M$.

\begin{proposition}[Self-concordant comparisons]\label{lem:sc}
For every $v,w\in[-B,B]$,
\[
  e^{-M|v-w|}\nu(w)\le\nu(v)\le e^{M|v-w|}\nu(w).
\]
Consequently, if $u_\star\in[-b,b]$ and
\[
  |u-u_\star|\vee |u'-u_\star|\le 2r_{\rm loc},
\]
then
\[
  |\mu(u)-\mu(u')|
  \le
  e^2\nu_\star |u-u'|.
\]
\end{proposition}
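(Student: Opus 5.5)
The first claim follows by integrating the logarithmic derivative of $\nu$. Since $\nu=\psi_Q''>0$ and $\psi_Q$ is smooth on $\cU$, the function $\log\nu$ is differentiable on $[-B,B]$ with $(\log\nu)'=\psi_Q'''/\psi_Q''$. By the definition of $M$, we have $|(\log\nu)'(s)|\le M$ for every $s\in[-B,B]$. Fix $v,w\in[-B,B]$. The segment between them lies in $[-B,B]$, so
\[
  |\log\nu(v)-\log\nu(w)|
  =
  \Bigl|\int_w^v (\log\nu)'(s)\,\dif s\Bigr|
  \le
  M|v-w| .
\]
Exponentiating gives both inequalities $e^{-M|v-w|}\nu(w)\le\nu(v)\le e^{M|v-w|}\nu(w)$.

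For the second claim we use the mean value theorem in integral form. Since $\mu'=\nu$, we have $\mu(u)-\mu(u')=\int_{u'}^{u}\nu(s)\,\dif s$, so it suffices to bound $\nu(s)\le e^2\nu_\star$ for every $s$ between $u$ and $u'$. Both $u$ and $u'$ lie in $[u_\star-2r_{\rm loc},u_\star+2r_{\rm loc}]$, which is an interval, so every such $s$ satisfies $|s-u_\star|\le 2r_{\rm loc}$. Next we check that $s\in[-B,B]$, so that the first part applies. Because $|u_\star|\le b$ and $2r_{\rm loc}\le B-b$ (from $r_{\rm loc}\le (B-b)/2$), we get $|s|\le b+(B-b)=B$. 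The first part, with $v=s$ and $w=u_\star$, then gives
\[
  \nu(s)\le e^{M|s-u_\star|}\nu(u_\star)\le e^{2Mr_{\rm loc}}\nu_\star\le e^{2}\nu_\star ,
\]
using $r_{\rm loc}\le 1/M$. If $M=0$, the exponent is $0$ and the bound holds trivially. Integrating over $s$ yields $|\mu(u)-\mu(u')|\le e^2\nu_\star|u-u'|$.

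The only points needing care are these. The integration path must stay inside $[-B,B]$ so that $M$ controls $(\log\nu)'$ there, which is exactly what the factor $(B-b)/2$ in $r_{\rm loc}$ ensures. The convention $1/M=\infty$ handles the case $M=0$. Nothing else is an obstacle.
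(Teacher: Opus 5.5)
Your proof is correct and follows essentially the same route as the paper. You integrate $(\log\nu)'=\psi_Q'''/\psi_Q''$ over $[-B,B]$ to get the variance comparison. You then write $\mu(u)-\mu(u')$ as an integral of $\nu$ along the segment from $u'$ to $u$, which stays in $[-B,B]$ because $2r_{\rm loc}\le B-b$, and bound $\nu\le e^{2Mr_{\rm loc}}\nu_\star\le e^2\nu_\star$ on that segment.
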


\begin{proof}
Since $\nu=\psi_Q''$ and $\nu'=\psi_Q'''$, the definition of $M$ gives
\[
  \abs*{\frac{\dif}{\dif u}\log\nu(u)}
  =
  \abs*{\frac{\nu'(u)}{\nu(u)}}
  \le M
\]
on $[-B,B]$. Integrating this display between $v$ and $w$ proves the variance comparison. For the second claim, let
$u_s=(1-s)u'+su$. Then $u_s\in[-B,B]$, since $2r_{\rm loc}\le B-b$, and
$\nu(u_s)\le e^{2Mr_{\rm loc}}\nu_\star\le e^2\nu_\star$. Hence
\[
  \mu(u)-\mu(u')
  =
  (u-u')\int_0^1\nu(u_s)\dif s,
\]
which gives the claim after taking absolute values.
\end{proof}

The following is a weighted, warm-start specialization of the standard elliptical potential lemma
\citep[Lemma~19.4]{lattimore2020bandit}.

\begin{proposition}[Elliptical potential after warm-up]\label{lem:telescope}
Assume that
\[
  \max_{x\in\cX}\norm{x}_{V_\tau^{-1}}^2\le\frac1{\kappa L}.
\]
On the event $\cE_{\rm loc}$ of \cref{lem:warm-up},
\[
  \sum_{t=\tau+1}^n\norm{X_t}_{G_{t-1}^{-1}}^2
  \le
  \frac{2e^2d}{\nu_\star}
  \log\roundb[\bigg]{1+\frac{nL}{\lambda d}}.
\]
\end{proposition}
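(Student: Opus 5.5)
Our plan is to reduce the weighted potential to the standard elliptical potential lemma applied to rescaled vectors. For $t>\tau$, set $Z_t=\sqrt{\tilde\nu_t}\,X_t$, so that $G_t=G_{t-1}+Z_tZ_t\tran$ and $G_\tau\succeq\lambda I$.

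The first step is to compare $\norm{X_t}_{G_{t-1}^{-1}}^2$ with $\norm{Z_t}_{G_{t-1}^{-1}}^2/\nu_\star$. On $\cE_{\rm loc}$, \cref{lem:warm-up} gives $|\hat u_t-u_\star|\le|\hat u_t-u_t^\star|+|u_t^\star-u_\star|\le r_{\rm loc}/2+2r_{\rm loc}$. Since $u_\star\in[-b,b]$ and $r_{\rm loc}\le(B-b)/2$, this keeps $\hat u_t$ close to $[-B,B]$, but it need not lie inside it. So we compare through the clipped value. Clipping is a contraction toward $[-B,B]\ni u_\star$, hence $|\tilde u_t-u_\star|\le|\hat u_t-u_\star|\le \tfrac52 r_{\rm loc}$. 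By \cref{lem:sc} and $Mr_{\rm loc}\le1$ this yields
\[
  \nu_\star\le e^{5/2}\tilde\nu_t .
\]
The stated constant $e^2$ suggests the authors get $|\tilde u_t-u_\star|\le 2r_{\rm loc}$. If I cannot reproduce that sharper localisation, I would accept a slightly worse constant. This comparison gives
\[
  \norm{X_t}_{G_{t-1}^{-1}}^2\le \frac{e^2}{\nu_\star}\norm{Z_t}_{G_{t-1}^{-1}}^2 .
\]

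The second step is the standard potential argument, for which we need $\norm{Z_t}_{G_{t-1}^{-1}}^2\le1$. This is where the warm-up hypothesis enters. We have $G_{t-1}\succeq G_\tau\succeq \frac1\kappa V_\tau$, using $\lambda\ge\lambda/\kappa$ because $\kappa\ge1$. Hence
\[
  \norm{Z_t}_{G_{t-1}^{-1}}^2\le \tilde\nu_t\,\kappa\,\norm{X_t}_{V_\tau^{-1}}^2\le L\kappa\cdot\frac1{\kappa L}=1 .
\]
With this bound, $x\le 2\log(1+x)$ on $[0,1]$ and the matrix determinant lemma give
\[
  \sum_{t>\tau}\norm{Z_t}_{G_{t-1}^{-1}}^2\le 2\log\frac{\det G_n}{\det G_\tau}\le 2\log\frac{\det G_n}{\det(\lambda I)} .
\]

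The final step is the AM--GM trace bound on $\det G_n$. We have $\operatorname{tr} G_n\le d\lambda+\tau/\kappa+(n-\tau)L\le d\lambda+nL$, using $\norm{X_s}\le1$. Therefore
\[
  \log\frac{\det G_n}{\lambda^d}\le d\log\Bigl(1+\frac{nL}{\lambda d}\Bigr)=d\,\ell_n .
\]
Combining the three steps gives the claim. The main obstacle is the first step: obtaining the variance comparison with the exact constant $e^2$ through the clipping.
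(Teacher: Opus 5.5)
Your architecture is the same as the paper's. You rescale to $Z_t=\sqrt{\tilde\nu_t}X_t$, use the warm-up condition to get $\norm{Z_t}_{G_{t-1}^{-1}}^2\le 1$, telescope determinants, and finish with AM--GM on the trace. Your variant in the second step ($G_{t-1}\succeq G_\tau\succeq V_\tau/\kappa$ directly, instead of $G_{t-1}\succeq V_{t-1}/\kappa$ plus monotonicity of $V_t^{-1}$) is fine. So are the clipping/non-expansiveness argument and the trace bound.

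The gap is in your first step, and it concerns the stated constant. Triangulating through $u_t^\star$ only gives $|\tilde u_t-u_\star|\le \tfrac52 r_{\rm loc}$, hence $\nu_\star\le e^{5/2}\tilde\nu_t$. The display you then write, $\norm{X_t}_{G_{t-1}^{-1}}^2\le e^2\norm{Z_t}_{G_{t-1}^{-1}}^2/\nu_\star$, does not follow from that. As written, you prove the bound with $2e^{5/2}$ in place of the stated $2e^2$. This matters because the constant $2e^2$ is what feeds into the later lemmas and the constant $300$ of the main theorem.

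The missing observation is that \cref{lem:warm-up} localises the sampled value $u_t$ more tightly than $u_t^\star$: $|u_t-u_\star|\le r_{\rm loc}$, versus a window of width $2r_{\rm loc}$ for $u_t^\star$. It also gives $|u_t-\hat u_t|\le r_{\rm loc}/2$. Triangulate through $u_t$ instead:
\[
  |\tilde u_t-u_\star|\le|\hat u_t-u_\star|\le|\hat u_t-u_t|+|u_t-u_\star|\le \tfrac32 r_{\rm loc}.
\]
Then \cref{lem:sc} with $Mr_{\rm loc}\le 1$ gives $\tilde\nu_t\ge e^{-3/2}\nu_\star\ge e^{-2}\nu_\star$. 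This is exactly the paper's step. With that substitution, the rest of your argument yields the stated bound.
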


\begin{proof}
On $\cE_{\rm loc}$, for $t>\tau$, since projection onto $[-B,B]$ is non-expansive,
\[
  |\tilde u_t - u_\star| \leq |\hat u_t-u_\star|
  \le
  |\hat u_t-u_t|+|u_t-u_\star|
  \le
  \frac{r_{\rm loc}}{2}+r_{\rm loc}
  \le
  \frac{3r_{\rm loc}}{2}.
\]
Hence \cref{lem:sc} gives
\[
  \tilde\nu_t=\nu(\tilde u_t)\ge e^{-2}\nu_\star.
\]
Therefore
\[
  \sum_{t=\tau+1}^n\norm{X_t}_{G_{t-1}^{-1}}^2
  =
  \sum_{t=\tau+1}^n
  \frac{1}{\tilde\nu_t}\norm{\sqrt{\tilde\nu_t}X_t}_{G_{t-1}^{-1}}^2
  \le
  \frac{e^2}{\nu_\star}
  \sum_{t=\tau+1}^n
  \norm{\sqrt{\tilde\nu_t}X_t}_{G_{t-1}^{-1}}^2.
\]

For $s\le\tau$, put $a_s=X_s/\sqrt\kappa$, and for $s>\tau$, put $a_s=\sqrt{\tilde\nu_s}X_s$. Then
\[
  G_t=\lambda I+\sum_{s=1}^t a_sa_s\tran.
\]
Also $\norm{a_s}_2^2\le L$ for all $s\le n$, by the definitions of $L,\kappa$ and the clipped curvature.

For $t>\tau$, since $\tilde\nu_s\ge1/\kappa$ and $\kappa\ge1$,
\[
  G_{t-1}\succeq \frac1\kappa V_{t-1}.
\]
The design condition and monotonicity of $V_t^{-1}$ give
\[
  \max_{x\in\cX}\norm{x}_{V_{t-1}^{-1}}^2
  \le
  \max_{x\in\cX}\norm{x}_{V_\tau^{-1}}^2
  \le
  \frac1{\kappa L}.
\]
Therefore,
\[
  \norm{a_t}_{G_{t-1}^{-1}}^2
  =
  \tilde\nu_t\norm{X_t}_{G_{t-1}^{-1}}^2
  \le
  L\kappa\norm{X_t}_{V_{t-1}^{-1}}^2
  \le1.
\]
Using $\log(1+z)\ge z/2$ for $z\in[0,1]$ and the determinant identity for rank-one updates (see proof of \citet[Lemma 19.4]{lattimore2020bandit} for details),
\[
  \sum_{t=\tau+1}^n\norm{a_t}_{G_{t-1}^{-1}}^2
  \le
  2\sum_{t=\tau+1}^n
  \log\roundb[\big]{1+\norm{a_t}_{G_{t-1}^{-1}}^2}
  \le
  2\log\frac{\det(G_n)}{\det(G_\tau)}
  \le
  2\log\frac{\det(G_n)}{\det(\lambda I)}.
\]
Finally, $\tr(G_n)\le\lambda d+nL$, and the arithmetic--geometric mean inequality gives
\[
  \log\frac{\det(G_n)}{\det(\lambda I)}
  \le
  d\log\roundb[\bigg]{1+\frac{nL}{\lambda d}}.
\]
(See note 20.2.1 of \citet{lattimore2020bandit} for details). Combining the displays proves the claim.
\end{proof}

We are ready to knock out the sampling term; this is the easy part, since both $u_t^\star$ and $u_t$ are based on the sampled arm $X_t$, and we need not compare with the optimal arm $x_\star$.

\begin{lemma}[Sampling-error bound]\label{lem:easy-term}
Assume that
\[
  \max_{x\in\cX}\norm{x}_{V_\tau^{-1}}^2\le\frac1{\kappa L}.
\]
On the event $\cE_{\rm loc}$ of \cref{lem:warm-up},
\[
  \sum_{t=\tau+1}^n\roundb[\big]{\mu(u_t)-\mu(u_t^\star)}
  \le
  e^3(\gamma\Gamma_\delta+\sqrt e)\beta_{n-1}^\delta
  \sqrt{2\nu_\star nd\log\roundb[\bigg]{1+\frac{nL}{\lambda d}}}.
\]
\end{lemma}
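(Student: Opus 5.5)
On the event $\cE_{\rm loc}$ of \cref{lem:warm-up}, every $u_t$ and $u_t^\star$ with $t>\tau$ lies within $2r_{\rm loc}$ of $u_\star$. The proof therefore starts from the Lipschitz comparison of \cref{lem:sc}, which gives
\[
  \mu(u_t)-\mu(u_t^\star)\le e^2\nu_\star|u_t-u_t^\star|.
\]
We then split the difference as
\[
  |u_t-u_t^\star|\le|u_t-\hat u_t|+|\hat u_t-u_t^\star|
  =\gamma\beta_{t-1}^\delta|\langle X_t,G_{t-1}^{-1/2}\eta_t\rangle|+|\langle X_t,\hat\theta_{t-1}-\theta_\star\rangle|
\]
and bound each piece by a multiple of $\norm{X_t}_{G_{t-1}^{-1}}$.

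\textbf{Sampling piece.} By Cauchy--Schwarz and the bound $\norm{\eta_t}_2\le\Gamma_\delta$ on $\cE_{\rm loc}$,
\[
  \gamma\beta_{t-1}^\delta|\langle X_t,G_{t-1}^{-1/2}\eta_t\rangle|\le\gamma\beta_{t-1}^\delta\Gamma_\delta\norm{X_t}_{G_{t-1}^{-1}}.
\]

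\textbf{Estimation piece.} We use $\cE_{\rm loc}\subset\cE^\star$ to get
\[
  |\langle X_t,\hat\theta_{t-1}-\theta_\star\rangle|\le\beta_{t-1}^\delta\norm{X_t}_{(H^\star_{t-1})^{-1}}.
\]
The main step is then to show $H^\star_{t-1}\succeq e^{-1}G_{t-1}$, so that $\norm{X_t}_{(H^\star_{t-1})^{-1}}\le\sqrt e\,\norm{X_t}_{G_{t-1}^{-1}}$. We compare the two matrices term by term.

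For $s\le\tau$, the warm-up weight in $G$ is $1/\kappa$, while the weight in $H^\star$ is $\nu(u_s^\star)$. Since $|u_s^\star|\le b\le B$, the definition of $\kappa$ gives $\nu(u_s^\star)\ge1/\kappa$.

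For $s>\tau$, the weight in $G$ is $\tilde\nu_s=\nu(\tilde u_s)$ and the weight in $H^\star$ is $\nu(u_s^\star)$. Clipping is non-expansive and $u_s^\star\in[-B,B]$, so
\[
  |\tilde u_s-u_s^\star|\le|\hat u_s-u_s^\star|\le r_{\rm loc}/2\le 1/(2M).
\]
\cref{lem:sc} then gives $\nu(u_s^\star)\ge e^{-1/2}\tilde\nu_s\ge e^{-1}\tilde\nu_s$. Both sequences share the regulariser $\lambda I$, so the Loewner comparison $H^\star_{t-1}\succeq e^{-1}G_{t-1}$ follows.

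\textbf{Summing.} Using that $\beta^\delta$ is nondecreasing, we obtain
\[
  \sum_{t>\tau}\roundb[\big]{\mu(u_t)-\mu(u_t^\star)}\le e^2\nu_\star(\gamma\Gamma_\delta+\sqrt e)\beta_{n-1}^\delta\sum_{t>\tau}\norm{X_t}_{G_{t-1}^{-1}}.
\]
Cauchy--Schwarz gives $\sum_{t>\tau}\norm{X_t}_{G_{t-1}^{-1}}\le\sqrt{n\sum_{t>\tau}\norm{X_t}^2_{G_{t-1}^{-1}}}$. The sum inside is bounded by \cref{lem:telescope} as
\[
  \sum_{t>\tau}\norm{X_t}^2_{G_{t-1}^{-1}}\le \frac{2e^2d\,\ell_n}{\nu_\star},
\]
which is where the warm-up hypothesis $1/(\kappa L)$ enters. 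Multiplying out, the factor $\nu_\star$ in front becomes $\sqrt{\nu_\star}$, and $e^2\cdot e=e^3$ gives exactly the stated constant.

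The step requiring the most care is the comparison $H^\star_{t-1}\succeq e^{-1}G_{t-1}$. It is the only place where the data-dependent weights $\tilde\nu_s$ must be related to the true curvature. The argument relies on the localisation $|\hat u_s-u_s^\star|\le r_{\rm loc}/2$ from \cref{lem:warm-up} holding at every past step $s>\tau$, which is why an event uniform in time is needed.
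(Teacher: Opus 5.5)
Your proof is correct and follows the paper's argument. It uses the same Lipschitz comparison from \cref{lem:sc}, the same Loewner bound $G_{t-1}\preceq eH_{t-1}^\star$ (via clipping and $|\hat u_s-u_s^\star|\le r_{\rm loc}/2$), and the same Cauchy--Schwarz plus \cref{lem:telescope} finish; the only cosmetic difference is that you split $|u_t-u_t^\star|$ by the triangle inequality before applying Cauchy--Schwarz, whereas the paper first writes $|u_t-u_t^\star|\le\norm{X_t}_{G_{t-1}^{-1}}\norm{\theta_t-\theta_\star}_{G_{t-1}}$ and then splits $\norm{\theta_t-\theta_\star}_{G_{t-1}}$.
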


\begin{proof}
Fix $t>\tau$. On $\cE_{\rm loc}$, $|u_t-u_\star|\le r_{\rm loc}$ and
$|u_t^\star-u_\star|\le 2r_{\rm loc}$. By \cref{lem:sc},
\[
  \mu(u_t)-\mu(u_t^\star)
  \le
  e^2\nu_\star |u_t-u_t^\star|
  \le
  e^2\nu_\star
  \norm{X_t}_{G_{t-1}^{-1}}
  \norm{\theta_t-\theta_\star}_{G_{t-1}}.
\]
By the warm-up guarantee, for every $s<t$ with $s>\tau$, $|\hat u_s-u_s^\star|\le r_{\rm loc}/2$. Since projection onto
$[-B,B]$ is non-expansive and $u_s^\star\in[-B,B]$,
\[
  |\tilde u_s-u_s^\star|\le|\hat u_s-u_s^\star|\le\frac{r_{\rm loc}}{2}.
\]
Thus $\tilde\nu_s\le e\nu(u_s^\star)$. Together with $1/\kappa\le\nu(u_s^\star)$ for $s\le\tau$, this gives
\[
  G_{t-1}
  \preceq
  e\roundb[\Big]{\lambda I+\sum_{s=1}^{t-1}\nu(u_s^\star)X_sX_s\tran}
  =
  eH_{t-1}^\star.
\]
Therefore, on $\cE_{\rm loc}$,
\[
\begin{aligned}
  \norm{\theta_t-\theta_\star}_{G_{t-1}}
  &\le
  \norm{\theta_t-\hat\theta_{t-1}}_{G_{t-1}}
  +
  \norm{\hat\theta_{t-1}-\theta_\star}_{G_{t-1}}
  \\
  &\le
  \gamma\beta_{t-1}^\delta\norm{\eta_t}_2
  +
  \sqrt e\,\norm{\hat\theta_{t-1}-\theta_\star}_{H_{t-1}^\star}
  \\
  &\le
  \beta_{n-1}^\delta(\gamma\Gamma_\delta+\sqrt e).
\end{aligned}
\]
Summing and applying Cauchy--Schwarz,
\[
\begin{aligned}
  \sum_{t=\tau+1}^n\roundb[\big]{\mu(u_t)-\mu(u_t^\star)}
  &\le
  e^2\nu_\star\beta_{n-1}^\delta(\gamma\Gamma_\delta+\sqrt e)
  \sum_{t=\tau+1}^n\norm{X_t}_{G_{t-1}^{-1}}
  \\
  &\le
  e^2\nu_\star\beta_{n-1}^\delta(\gamma\Gamma_\delta+\sqrt e)
  \sqrt{
    n\sum_{t=\tau+1}^n\norm{X_t}_{G_{t-1}^{-1}}^2
  }.
\end{aligned}
\]
The result follows from \cref{lem:telescope}.
\end{proof}

\begin{lemma}[Gaussian Poincar\'e bound for support functions]
\label{lem:support-poincare}
Let $\cX\subset\Bd$ be compact and define
\[
  J(\theta)=\max_{x\in\cX}\langle x,\theta\rangle .
\]
Let $\eta\sim\cN(0,I_d)$, let $\bar\theta\in\Rd$, and let $A\in\R^{d\times d}$ be
invertible. Set
\[
  \theta=\bar\theta+A\eta,
  \qquad
  X(\theta)\in\argmax_{x\in\cX}\langle x,\theta\rangle,
\]
where $X(\theta)$ is any measurable maximiser. Then
\[
  \Var\!\left(J(\theta)\right)
  \le
  \E \norm{A\tran X(\theta)}_2^2 .
\]
\end{lemma}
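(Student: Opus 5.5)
We would apply the Gaussian Poincaré inequality to the composite map $g(\eta)=J(\bar\theta+A\eta)$. This gives $\Var(g(\eta))\le\E\norm{\nabla g(\eta)}_2^2$ for $g$ locally Lipschitz, with $\nabla g$ understood as the almost-everywhere gradient \citep{ledoux2001concentration}. The plan has three steps: establish the regularity of $J$, identify its gradient with the maximiser wherever it exists, and then transfer this through the affine change of variables.

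\emph{Regularity.} $J$ is the support function of the compact set $\cX\subset\Bd$. As a pointwise maximum of linear functions it is convex. It is also $1$-Lipschitz, since $|J(\theta)-J(\theta')|\le\max_{x\in\cX}|\langle x,\theta-\theta'\rangle|\le\norm{\theta-\theta'}_2$. Hence $g$ is Lipschitz with constant $\norm{A}_{\rm op}$. By Rademacher's theorem $g$ is differentiable Lebesgue-almost everywhere, and therefore $\cN(0,I_d)$-almost surely. The Poincaré inequality holds for such Lipschitz functions, which are in particular in $L^2$. If needed, one can justify this by smoothing: mollify $g$, apply the inequality to the smooth approximants, and pass to the limit by dominated convergence, using the uniform gradient bound $\norm{A}_{\rm op}$.

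\emph{Gradient identification.} This is the main point. At any $\theta$ where $J$ is differentiable, the subdifferential $\partial J(\theta)$ is the singleton $\{\nabla J(\theta)\}$. Every maximiser $x\in\argmax_{x\in\cX}\langle x,\theta\rangle$ is a subgradient, because $J(\theta')\ge\langle x,\theta'\rangle=J(\theta)+\langle x,\theta'-\theta\rangle$. Consequently $\nabla J(\theta)=X(\theta)$ for whichever measurable selection is used. Since $A$ is invertible, the map $\eta\mapsto\bar\theta+A\eta$ sends Lebesgue-null sets to null sets. It follows that $J$ is differentiable at $\theta=\bar\theta+A\eta$ for almost every $\eta$, and by the chain rule $\nabla g(\eta)=A\tran\nabla J(\theta)=A\tran X(\theta)$ almost surely.

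\emph{Conclusion.} Combining the two steps gives $\Var(J(\theta))=\Var(g(\eta))\le\E\norm{A\tran X(\theta)}_2^2$. The measurability of $X(\theta)$ makes the right-hand side well defined, and the choice of maximiser on the null set of non-differentiability is irrelevant. The only delicate step is the almost-sure identification of the gradient with the argmax. Convexity together with the invertibility of $A$ handles it cleanly, so I do not anticipate any genuine obstacle.
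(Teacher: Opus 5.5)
Your proposal is correct and follows essentially the same route as the paper: convexity and the $1$-Lipschitz property of $J$, Rademacher's theorem, identifying every maximiser with $\nabla J(\theta)$ at differentiability points, using the invertibility of $A$ to transfer this almost-sure statement to $\eta$, and the Gaussian Poincaré inequality extended to Lipschitz functions by mollification. The only difference is that you prove directly the elementary inclusion that every maximiser is a subgradient, where the paper cites Danskin's theorem; that inclusion is all the argument needs. (Strictly, what you need is that \emph{preimages} of null sets under $\eta\mapsto\bar\theta+A\eta$ are null, which also holds because $A$ is invertible.)
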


\begin{proof}
The function $J$ is convex and $1$-Lipschitz. By Rademacher's theorem,
$J$ is differentiable Lebesgue-a.e. \citep[Theorem~3.2]{evansgariepy2015measure}.
By Danskin's theorem,
\[
  \partial J(\theta)
  =
  \conv\argmax_{x\in\cX}\langle x,\theta\rangle .
\]
Moreover, a convex function is differentiable at $\theta$ if and only if its
subdifferential at $\theta$ is a singleton
\citep[Props.~B.21(c) and~B.22(b)]{bertsekas2016nonlinear}. Hence, at every
differentiability point of $J$, every maximiser in
$\argmax_{x\in\cX}\langle x,\theta\rangle$ is equal to $\nabla J(\theta)$.

Since $A$ is invertible, the law of $\theta=\bar\theta+A\eta$ is absolutely
continuous with respect to Lebesgue measure. Thus the nondifferentiability set
of $J$ has probability zero under the law of $\theta$. Consequently, for
Gaussian-a.e. $\eta$, the function
\[
  f(\eta)=J(\bar\theta+A\eta)
\]
is differentiable and satisfies
\[
  \nabla f(\eta)
  =
  A\tran X(\theta).
\]

Applying the Gaussian Poincar\'e inequality to $f$, using the standard
Lipschitz approximation argument by convolution
\citep[Theorem~3.20 and the following approximation paragraph]{boucheron2013concentration},
gives
\[
  \Var(f(\eta))
  \le
  \E\norm{\nabla f(\eta)}_2^2
  =
  \E\norm{A\tran X(\theta)}_2^2 .
\]
This proves the claim.
\end{proof}

The proof of the next proposition is in \cref{appendix:glm-optimism}.

\begin{proposition}[name=Optimism,restate=optimismProp]\label{prop:optimism}
Fix $p\in(0,1/2)$. Assume that
\[
  \max_{x\in\cX}\norm{x}_{V_\tau^{-1}}^2
  \le
  \frac{r_{\rm loc}^2}{\kappa(\beta_{n-1}^\delta)^2}.
\]
Let $z_p \in \R$ be such that if
$Z \sim \cN(0,1)$, then $\P{Z\ge z_p}=p$. If $\gamma\ge\sqrt e/z_p$, then on $\cE^\star$, for every $t>\tau$,
\[
  \P{u_t\ge u_\star\mid \cF_{t-1}}\ge p.
\]
\end{proposition}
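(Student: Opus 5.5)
Since $u_t = \langle X_t, \theta_t\rangle = \max_{x\in\cX}\langle x,\theta_t\rangle \ge \langle x_\star,\theta_t\rangle$, it suffices to show
\[
  \P{\langle x_\star,\theta_t\rangle\ge u_\star\mid\cF_{t-1}}\ge p .
\]
This reduces the event to a one-dimensional Gaussian. Conditionally on $\cF_{t-1}$, the quantity $\langle x_\star,\theta_t\rangle$ is normal with mean $\langle x_\star,\hat\theta_{t-1}\rangle$ and standard deviation $\sigma_t=\gamma\beta_{t-1}^\delta\norm{x_\star}_{G_{t-1}^{-1}}$. Indeed $\hat\theta_{t-1}$, $\beta_{t-1}^\delta$ and $G_{t-1}$ are $\cF_{t-1}$-measurable, and $\eta_t$ is independent of $\cF_{t-1}$. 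Therefore
\[
  \P{\langle x_\star,\theta_t\rangle\ge u_\star\mid\cF_{t-1}}
  =\P{Z\ge \frac{\langle x_\star,\theta_\star-\hat\theta_{t-1}\rangle}{\sigma_t}},
\]
and it is enough to show
\[
  \langle x_\star,\theta_\star-\hat\theta_{t-1}\rangle\le z_p\sigma_t ,
\]
with $z_p>0$ because $p<1/2$.

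By Cauchy--Schwarz in the $H_{t-1}^\star$-geometry and on $\cE^\star$,
\[
  \langle x_\star,\theta_\star-\hat\theta_{t-1}\rangle
  \le \beta_{t-1}^\delta\norm{x_\star}_{(H_{t-1}^\star)^{-1}} .
\]
The key step is then the matrix comparison
\[
  H_{t-1}^\star\succeq e^{-1}G_{t-1},
\]
which gives $\norm{x_\star}_{(H_{t-1}^\star)^{-1}}\le\sqrt e\norm{x_\star}_{G_{t-1}^{-1}}$. With $\gamma z_p\ge\sqrt e$ this yields exactly the required inequality.

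For $s\le\tau$ the comparison is immediate: $\nu(u_s^\star)\ge1/\kappa$ because $|u_s^\star|\le b\le B$. For $\tau<s<t$ one needs $\nu(u_s^\star)\ge e^{-1}\tilde\nu_s$. By \cref{lem:sc}, it suffices to have $|\tilde u_s-u_s^\star|\le|\hat u_s-u_s^\star|\le 1/M$, using that projection onto $[-B,B]$ is non-expansive and $u_s^\star\in[-B,B]$.

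The main obstacle is securing $|\hat u_s-u_s^\star|\le 1/M$ using only $\cE^\star$ and the stated warm-up condition, without invoking $\cE_{\rm loc}$ (which involves the noise). I would argue that on $\cE^\star$,
\[
  |\hat u_s-u_s^\star|\le\beta_{s-1}^\delta\norm{X_s}_{(H_{s-1}^\star)^{-1}} .
\]
Since $H_{s-1}^\star\succeq\kappa^{-1}V_{s-1}\succeq\kappa^{-1}V_\tau$, this is at most
\[
  \beta_{n-1}^\delta\sqrt{\kappa}\,\norm{X_s}_{V_\tau^{-1}}\le r_{\rm loc}\le 1/M ,
\]
by the assumed bound on $\max_x\norm{x}_{V_\tau^{-1}}^2$ and the monotonicity of $\beta^\delta$. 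This needs $\beta_{s-1}^\delta\le\beta_{n-1}^\delta$ for $s\le n$, which holds since the sequence is nondecreasing. If $M=0$ the comparison is trivial, since then $\nu$ is constant on $[-B,B]$.

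Note that this bound does not require the induction I initially feared: it uses only $H^\star$, not $G$. So the comparison holds directly for every $s$, and the proposition follows for every $t>\tau$.
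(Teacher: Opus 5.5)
Your proposal is correct and follows essentially the same route as the paper. You bound $\langle x_\star,\theta_t\rangle$ from below, control $\langle x_\star,\hat\theta_{t-1}-\theta_\star\rangle$ by Cauchy--Schwarz on $\cE^\star$, and obtain $G_{t-1}\preceq eH_{t-1}^\star$ by localising $\hat u_s$ via $H_{s-1}^\star\succeq\kappa^{-1}V_\tau$ (which the paper packages as \cref{lem:warmup-mle}) together with \cref{lem:sc}. The only missing detail is the degenerate case $\sigma_t=0$ (i.e.\ $x_\star=0$), where you should not divide by $\sigma_t$; there $\langle x_\star,\theta_t\rangle=u_\star$, so the claim is trivial, as the paper notes.
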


\begin{lemma}[Cantelli shortfall bound]\label{lem:cantelli}
Let $X$ be square-integrable with variance $\sigma^2$. If $\P{X\ge a}\ge p$ for some $p\in(0,1)$, then
\[
  \E[(a-X)_+]\le
  \roundb[\Big]{1+\sqrt{\frac{1-p}{p}}}\sigma.
\]
\end{lemma}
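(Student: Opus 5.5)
The plan is to compare $(a-X)_+$ with $|X'-X|$, where $X'$ is an independent copy of $X$. Write $m=\E X$. Since $(a-X)_+\le (a-m)_+ + (m-X)_+$, it suffices to bound the two pieces separately:
\[
  \E[(a-X)_+]\le (a-m)_+ + \E[(m-X)_+].
\]
For the second piece, Jensen's (or Cauchy--Schwarz) inequality gives $\E[(m-X)_+]\le \E|X-m|\le\sigma$. This supplies the constant $1$ in the bound.

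For the first piece, we may assume $a>m$, since otherwise $(a-m)_+=0$. Cantelli's one-sided inequality then gives
\[
  p\le\P{X\ge a}=\P{X-m\ge a-m}\le\frac{\sigma^2}{\sigma^2+(a-m)^2}.
\]
Rearranging yields $(a-m)^2\le\sigma^2(1-p)/p$, so $(a-m)_+\le\sigma\sqrt{(1-p)/p}$. Adding the two pieces gives the stated bound $\roundb[\big]{1+\sqrt{(1-p)/p}}\sigma$.

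The only step requiring care is using Cantelli's inequality correctly, in its one-sided form $\P{X-m\ge s}\le \sigma^2/(\sigma^2+s^2)$ for $s>0$. I will recall its short proof, since the paper does not state it. For any $c\ge0$,
\[
  \P{X-m\ge s}\le \P{X-m+c\ge s+c}\le \frac{\E(X-m+c)^2}{(s+c)^2}=\frac{\sigma^2+c^2}{(s+c)^2},
\]
where the middle step is Markov's inequality applied to $(X-m+c)^2$. Choosing $c=\sigma^2/s$ gives $\sigma^2/(\sigma^2+s^2)$. The degenerate case $\sigma=0$ is also fine: $X=m$ almost surely, so the hypothesis forces $m\ge a$, and then $(a-X)_+=0$. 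Everything else is routine.
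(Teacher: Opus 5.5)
Your proof is correct and is essentially the paper's argument: it uses the same split $(a-X)_+\le(a-m)_+ + (m-X)_+$, bounds $\mathbb{E}(m-X)_+\le\mathbb{E}|X-m|\le\sigma$, and applies Cantelli's inequality to get $(a-m)_+\le\sigma\sqrt{(1-p)/p}$. Your self-contained proof of Cantelli and the separate handling of $\sigma=0$ are harmless extras; the only blemish is the opening sentence about comparing with $|X'-X|$ for an independent copy $X'$, which the argument never uses and should be deleted.
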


\begin{proof}
Let $m_X=\E X$ and $r=(a-m_X)_+$. If $r>0$, then $\{X\ge a\}\subseteq\{X-m_X\ge r\}$, so Cantelli's inequality \citep[Exercise~2.3]{boucheron2013concentration} gives
\[
  p\le \P{X-m_X\ge r}\le \frac{\sigma^2}{\sigma^2+r^2}.
\]
Thus $r\le\sigma\sqrt{(1-p)/p}$. Since
\[
  (a-X)_+\le(a-m_X)_+ +(m_X-X)_+,
\]
taking expectations and using $\E(m_X-X)_+\le\E|X-m_X|\le\sigma$ proves the claim.
\end{proof}

The following predictable-to-realised conversion is a standard
extension of Freedman's martingale Bernstein inequality.

\begin{lemma}[Freedman inequality]\label{lem:freedman}
Let $(Z_t)_{t=1}^n$ be a nonnegative adapted sequence with respect to
$(\cF_t)_{t=0}^n$. Assume that $0\le Z_t\le b$ almost surely for a deterministic
$b>0$. Define
\[
  m_t=\E_{t-1}Z_t,\qquad
  S=\sum_{t=1}^n Z_t,\qquad
  T=\sum_{t=1}^n m_t.
\]
Then, for every $\rho\in(0,1)$ and every $\delta\in(0,1)$, with probability at
least $1-\delta$,
\[
  T
  \le
  \frac{1}{1-\rho}S
  +
  \frac{b}{\rho(1-\rho)}\log\frac1\delta .
\]
\end{lemma}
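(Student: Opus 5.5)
We need to prove: with probability at least $1-\delta$, $T \le S/(1-\rho) + \frac{b}{\rho(1-\rho)}\log(1/\delta)$. The plan is the standard exponential supermartingale argument applied to the negative increments $m_t - Z_t$, exploiting that for nonnegative bounded variables the conditional variance is controlled by the conditional mean, $\E_{t-1}Z_t^2\le b\,m_t$.

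\emph{Step 1: rescaling.} Replace $Z_t$ by $Z_t/b\in[0,1]$. Both sides of the target inequality scale linearly in $b$, so it suffices to treat $b=1$.

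\emph{Step 2: one-step moment bound.} Fix $\alpha>0$, to be chosen later. For $z\in[0,1]$, convexity of $z\mapsto e^{-\alpha z}$ gives $e^{-\alpha z}\le 1-(1-e^{-\alpha})z$. Taking conditional expectations and using $1+x\le e^x$,
\[
  \E_{t-1} e^{-\alpha Z_t}\le \exp\roundb[\big]{-(1-e^{-\alpha})m_t}.
\]

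\emph{Step 3: supermartingale and Markov.} It follows that
\[
  W_k=\exp\roundb[\Big]{\sum_{t\le k}\roundb[\big]{(1-e^{-\alpha})m_t-\alpha Z_t}}
\]
is a nonnegative supermartingale with $W_0=1$. Markov's inequality then gives, with probability at least $1-\delta$,
\[
  (1-e^{-\alpha})\,T\le \alpha S+\log(1/\delta).
\]

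\emph{Step 4: choice of $\alpha$.} Take $\alpha=\rho$ and use the elementary bound $1-e^{-\rho}\ge \rho-\rho^2/2\ge \rho(1-\rho)$ for $\rho\in(0,1)$. This yields
\[
  T\le \frac{S}{1-e^{-\rho}}\rho+\frac{\log(1/\delta)}{1-e^{-\rho}}\le \frac{S}{1-\rho}+\frac{\log(1/\delta)}{\rho(1-\rho)}.
\]
Undoing the rescaling restores the factor $b$ on the logarithmic term, which is exactly the claim.

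The main obstacle is only the bookkeeping in Step 4: matching the constants $1/(1-\rho)$ and $1/(\rho(1-\rho))$ exactly. The inequality $1-e^{-\rho}\ge\rho(1-\rho)$ is what makes them match; it holds because $e^{-\rho}\le 1-\rho+\rho^2/2$ for $\rho\ge0$. If one preferred the Freedman form with predictable variance $\sum_t\E_{t-1}Z_t^2\le T$ instead, one would obtain $T-S\le \rho T+\log(1/\delta)/\rho$ up to constants, but the direct exponential argument above is cleaner and gives the stated constants.
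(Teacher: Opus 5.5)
Your proof is correct, and it takes a different, more self-contained route than the paper.

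The paper sets $Y_t=m_t-Z_t$ and notes that $Y_t\le b$. It uses the self-bounding variance estimate $\E_{t-1}Y_t^2\le\E_{t-1}Z_t^2\le b\,m_t$, then invokes Freedman's inequality as stated in Exercise~5.15 of \citet{lattimore2020bandit} with $\eta=\rho/b$. This gives $T-S\le\rho T+(b/\rho)\log(1/\delta)$, which rearranges to the claim.

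You never pass through a variance proxy. The chord bound $e^{-\alpha z}\le 1-(1-e^{-\alpha})z$ on $[0,1]$ controls the conditional moment generating function of $-Z_t$ directly by $m_t$. That gives your supermartingale and the multiplicative Chernoff-type bound $(1-e^{-\alpha})T\le\alpha S+\log(1/\delta)$ after rescaling.

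Your route has three advantages. It is a first-principles argument with no external inequality cited. It is valid for every $\alpha>0$, not only $\eta\le 1/b$. It also gives slightly sharper constants: since $1-e^{-\rho}\ge\rho(1-\rho/2)$, you actually get $T\le \frac{S}{1-\rho/2}+\frac{b\log(1/\delta)}{\rho(1-\rho/2)}$. The paper's route is shorter because it outsources the exponential-martingale step to a cited inequality, and it makes explicit that the key fact is ``conditional variance at most $b$ times conditional mean''.

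One small correction to your closing remark: the Freedman route is not lossy ``up to constants''. With $\eta=\rho/b$ it reproduces exactly the stated constants, which is precisely how the paper argues.
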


\begin{proof}
Let $Y_t=m_t-Z_t$. Then $(Y_t)_{t=1}^n$ is a martingale difference sequence and
$Y_t\le m_t\le b$. Moreover,
\[
  \E_{t-1}Y_t^2
  =
  \E_{t-1}(Z_t-m_t)^2
  \le
  \E_{t-1}Z_t^2
  \le
  b\,m_t.
\]
By the Freedman bound \citep{freedman1975tail} as stated in \citet[Exercise~5.15]{lattimore2020bandit}, for any fixed
$\eta\in(0,1/b]$, with probability at least $1-\delta$,
\[
  \sum_{t=1}^n Y_t
  \le
  \eta\sum_{t=1}^n\E_{t-1}Y_t^2
  +
  \frac1\eta\log\frac1\delta .
\]
Using the variance bound and choosing $\eta=\rho/b$ gives
\[
  T-S
  \le
  \rho T
  +
  \frac{b}{\rho}\log\frac1\delta .
\]
Rearranging proves the claim.
\end{proof}
\begin{lemma}[Optimism-deficit bound]\label{lem:hard-term}
Fix $p\in(0,1/2)$. Assume $\gamma\ge\sqrt e/z_p$ and
\[
  \max_{x\in\cX}\norm{x}_{V_\tau^{-1}}^2
  \le
  \frac1{\kappa L}
  \wedge
  \frac{r_{\rm loc}^2}{\kappa(\beta_{n-1}^\delta)^2}.
\]
There exist events $\cE_A$ and $\cE_F$, each of probability at least $1-\delta$, such that on
$\cE_{\rm loc}\cap\cE_A\cap\cE_F$,
\[
\begin{aligned}
\sum_{t=\tau+1}^n\roundb[\big]{\mu(u_\star)-\mu(u_t)}
\le{}&
e\nu_\star C_p\gamma\beta_{n-1}^\delta\sqrt n
\sqrt{
  \frac{16}{3\lambda}\log(1/\delta)
  +
  \frac{8e^2d}{3\nu_\star}
  \log\roundb[\bigg]{1+\frac{nL}{\lambda d}}
}
\\
&\quad
+e\nu_\star r_{\rm loc}\sqrt{2n\log(1/\delta)}.
\end{aligned}
\]
\end{lemma}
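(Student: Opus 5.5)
The plan is to bound each optimism-deficit term by the shortfall of the support function $J(\theta_t)=u_t$ below $u_\star$. Then control its conditional mean with \cref{lem:cantelli} and \cref{lem:support-poincare}, and pass from conditional expectations to realised sums with two martingale arguments, which define $\cE_A$ and $\cE_F$.

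\textbf{Step 1 (linearisation).} On $\cE_{\rm loc}$, for $t>\tau$, \cref{lem:warm-up} gives $|u_t-u_\star|\le r_{\rm loc}$. The first part of \cref{lem:sc} gives $\nu(v)\le e^{Mr_{\rm loc}}\nu_\star\le e\,\nu_\star$ for all $v$ between $u_t$ and $u_\star$. Hence
\[
  \mu(u_\star)-\mu(u_t)\le e\,\nu_\star\,(u_\star-u_t)_+ = e\,\nu_\star\,W_t,
  \qquad W_t:=(u_\star-u_t)_+\wedge r_{\rm loc},
\]
where the truncation at $r_{\rm loc}$ is free on $\cE_{\rm loc}$. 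It makes $W_t\in[0,r_{\rm loc}]$ almost surely, with no dependence on the good event.

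\textbf{Step 2 (conditional mean).} Let $g_{t-1}$ be the indicator of the $\cF_{t-1}$-measurable event $\{\norm{\hat\theta_{s-1}-\theta_\star}_{H^\star_{s-1}}\le\beta_{s-1}^\delta\ \forall s\le t\}$. Its product over all $t$ recovers $\cE^\star\supseteq\cE_{\rm loc}$.

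Conditionally on $\cF_{t-1}$, the parameter is $\theta_t=\hat\theta_{t-1}+A\eta_t$ with $A=\gamma\beta_{t-1}^\delta G_{t-1}^{-1/2}$. By \cref{lem:support-poincare},
\[
  \sigma_t^2:=\Var_{t-1}(u_t)\le\gamma^2(\beta_{t-1}^\delta)^2\,m_t,
  \qquad m_t:=\E_{t-1}\norm{X_t}^2_{G_{t-1}^{-1}}.
\]
On $\{g_{t-1}=1\}$, the argument of \cref{prop:optimism} (which only uses the confidence bound up to time $t-1$) gives $\P{u_t\ge u_\star\mid\cF_{t-1}}\ge p$. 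Then \cref{lem:cantelli} yields
\[
  g_{t-1}\E_{t-1}W_t\le C_p\,\gamma\,\beta_{n-1}^\delta\sqrt{m_t},
  \qquad C_p:=1+\sqrt{(1-p)/p}.
\]
Here I used $\beta_{t-1}^\delta\le\beta_{n-1}^\delta$. Care is needed since $\beta_{n-1}^\delta$ is random: I would keep $\beta_{t-1}^\delta$ inside the sum and only pull out the monotone bound at the end.

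\textbf{Step 3 (martingale conversions).}

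For the first conversion, $W_t-\E_{t-1}W_t$ are martingale differences bounded in $[-r_{\rm loc},r_{\rm loc}]$. Azuma--Hoeffding gives an event $\cE_A$ of probability at least $1-\delta$ on which
\[
  \sum_t W_t\le\sum_t\E_{t-1}W_t+r_{\rm loc}\sqrt{2n\log(1/\delta)}.
\]
Multiplying by $e\nu_\star$ produces the last term of the lemma.

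For the main term, Cauchy--Schwarz gives $\sum_t\sqrt{m_t}\le\sqrt{n\sum_t m_t}$. I then apply \cref{lem:freedman} to $Z_t=\norm{X_t}^2_{G_{t-1}^{-1}}$, which satisfies $Z_t\le 1/\lambda$ deterministically because $G_{t-1}\succeq\lambda I$ and $\cX\subset\Bd$. Taking $\rho=1/4$ gives, on an event $\cE_F$,
\[
  \sum_t m_t\le\tfrac43\sum_t Z_t+\tfrac{16}{3\lambda}\log(1/\delta).
\]
Finally, \cref{lem:telescope} bounds $\sum_t Z_t\le\frac{2e^2d}{\nu_\star}\ell_n$ on $\cE_{\rm loc}$, which produces the $\frac{8e^2d}{3\nu_\star}\ell_n$ term.

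\textbf{Main obstacle.} The delicate step is Step 2. The optimism probability and the use of $\beta^\delta$ must be applied conditionally, but $\cE^\star$ and $\cE_{\rm loc}$ are not $\cF_{t-1}$-measurable. My plan is to attach the predictable indicators $g_{t-1}$ (and the truncation in $W_t$) before taking conditional expectations. I would then check that on $\cE_{\rm loc}\cap\cE_A\cap\cE_F$ all indicators equal one, so the realised sum is bounded as claimed.
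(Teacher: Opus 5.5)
Your proposal is correct and follows the paper's proof essentially step for step. The steps are: linearisation via the self-concordant comparison with the truncated shortfall $(u_\star-u_t)_+\wedge r_{\rm loc}$; Azuma--Hoeffding for $\cE_A$; optimism, Cantelli and the Gaussian Poincar\'e bound for the conditional mean; Cauchy--Schwarz; Freedman with $\rho=1/4$ and $b=1/\lambda$ for $\cE_F$; and the elliptical potential bound. Your predictable indicators $g_{t-1}$ only make explicit the measurability bookkeeping that the paper handles by stating the conditional bounds ``on $\cE^\star$'', and they do not change the argument.
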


\begin{proof}
Work on $\cE_{\rm loc}$ and fix $t>\tau$. Since $\mu$ is increasing,
\[
  \mu(u_\star)-\mu(u_t)
  \le
  \roundb[\big]{\mu(u_\star)-\mu(u_t)}\1{u_\star\ge u_t}.
\]
Let
\[
  Z_t=(u_\star-u_t)_+,\qquad
  \bar Z_t=Z_t\wedge r_{\rm loc}.
\]
On $\cE_{\rm loc}$, $u_t\in[u_\star-r_{\rm loc},u_\star+r_{\rm loc}]$, so $Z_t=\bar Z_t$. Moreover, by \cref{lem:sc},
\[
  \roundb[\big]{\mu(u_\star)-\mu(u_t)}\1{u_\star\ge u_t}
  \le
  e\nu_\star \bar Z_t.
\]

Apply Azuma--Hoeffding to the martingale
\[
  \sum_{s=1}^t\1{s>\tau}\roundb[\big]{\bar Z_s-\E_{s-1}\bar Z_s}.
\]
The increments are bounded by $r_{\rm loc}$, so there is an event $\cE_A$, with $\P{\cE_A}\ge1-\delta$, such that
\[
  \sum_{t=\tau+1}^n\bar Z_t
  \le
  \sum_{t=\tau+1}^n\E_{t-1}\bar Z_t
  +
  r_{\rm loc}\sqrt{2n\log(1/\delta)}.
\]
Hence, on $\cE_{\rm loc}\cap\cE_A$,
\[
  \sum_{t=\tau+1}^n\roundb[\big]{\mu(u_\star)-\mu(u_t)}
  \le
  e\nu_\star\sum_{t=\tau+1}^n\E_{t-1}Z_t
  +
  e\nu_\star r_{\rm loc}\sqrt{2n\log(1/\delta)}.
\]

By \cref{prop:optimism,lem:cantelli}, on $\cE^\star$, writing $C_p = 1 + \sqrt{\frac{1-p}{p}}$,
\[
  \E_{t-1}Z_t
  =
  \E_{t-1}(u_\star-u_t)_+
  \le
  C_p\sqrt{\Var_{t-1}(u_t)}.
\]

Let $J(\theta)=\max_{x\in\cX}\langle x,\theta\rangle$. Then $u_t=J(\theta_t)$.
Conditionally on $\cF_{t-1}$, apply \cref{lem:support-poincare} with
\[
  \bar\theta=\hat\theta_{t-1},
  \qquad
  A=\gamma\beta_{t-1}^\delta G_{t-1}^{-1/2}.
\]
The matrix $A$ is invertible because $\gamma>0$, $\beta_{t-1}^\delta>0$, and
$G_{t-1}\succ0$. Hence
\[
  \Var_{t-1}(u_t)
  \le
  \gamma^2(\beta_{t-1}^\delta)^2
  \E_{t-1}\norm{X_t}_{G_{t-1}^{-1}}^2 .
\]

For $1\le t\le n$, define
\[
  \ell_t=\1{t>\tau}\norm{X_t}_{G_{t-1}^{-1}}^2,\qquad
  m_t=\E_{t-1}\ell_t.
\]
Then
\[
  \sum_{t=\tau+1}^n\E_{t-1}Z_t
  \le
  C_p\gamma\beta_{n-1}^\delta\sum_{t=1}^n\sqrt{m_t}
  \le
  C_p\gamma\beta_{n-1}^\delta\sqrt{nT},
\]
where $T=\sum_{t=1}^n m_t$. Let $S=\sum_{t=1}^n\ell_t$. Since $G_{t-1}\succeq\lambda I$ and $\cX\subset\Bd$,
\[
  0\le \ell_t\le\frac1\lambda.
\]
Applying \cref{lem:freedman} with $Z_t=\ell_t$, $b=1/\lambda$, and $\rho=1/4$, there is an event $\cE_F$, with
$\P{\cE_F}\ge1-\delta$, such that
\[
  T
  \le
  \frac{4}{3}S
  +
  \frac{16}{3\lambda}\log(1/\delta).
\]
On $\cE_{\rm loc}$, \cref{lem:telescope} gives
\[
  S
  \le
  \frac{2e^2d}{\nu_\star}
  \log\roundb[\bigg]{1+\frac{nL}{\lambda d}}.
\]
Therefore, on $\cE_{\rm loc}\cap\cE_F$,
\[
  \sqrt T
  \le
  \sqrt{
    \frac{16}{3\lambda}\log(1/\delta)
    +
    \frac{8e^2d}{3\nu_\star}
    \log\roundb[\bigg]{1+\frac{nL}{\lambda d}}
  }.
\]
Substituting this into the previous displays proves the claim.
\end{proof}

\begin{proof}[Proof of \cref{thm:glm-gaussian-sampling-regret}]
The warm-up length bound in \cref{prop:warmup-length} gives
\[
  \tau
  \le
  \frac{4d}{\epsloc}
  \log\roundb[\bigg]{1+\frac{4d}{\epsloc\lambda}}
  +1.
\]
If $\tau=n$, then $R_n\le n\Delta=\tau\Delta$, and the result follows. Hence we may assume $\tau<n$. In this case the warm-up phase has reached the design condition, so
\[
  \max_{x\in\cX}\norm{x}_{V_\tau^{-1}}^2
  \le
  \epsloc.
\]
If $\bar\beta_{n-1}^\delta=0$, then $\beta_{n-1}^\delta=0$. Since $(\beta_t^\delta)$ is nondecreasing, on $\cE^\star$ we have
$\hat\theta_{t-1}=\theta_\star$ for all $t\le n$, and in the sampling phase $\theta_t=\theta_\star$. Hence the post-warm-up regret is zero on $\cE^\star$, and the result follows. We may therefore assume $\bar\beta_{n-1}^\delta>0$.

The definition of $\epsloc$, together with $\beta_{n-1}^\delta\le\bar\beta_{n-1}^\delta$, gives
\[
  \epsloc
  \le
  \frac{r_{\rm loc}^2}{4\kappa(\beta_{n-1}^\delta)^2(1\vee\gamma\Gamma_\delta)^2}.
\]
Thus \cref{lem:warm-up} applies. The same design condition also implies the premises of
\cref{lem:telescope,lem:easy-term,lem:hard-term}. We apply the optimism results with $p=1/4$. Since
$z_{1/4}>1/2$, the choice $\gamma=4$ satisfies $\gamma\ge\sqrt e/z_{1/4}$.

On the event $\cE_{\rm loc}\cap\cE_A\cap\cE_F$, whose probability is at least $1-4\delta$, the regret decomposition gives
\[
  R_n
  \le
  \tau\Delta
  +
  \sum_{t=\tau+1}^n\roundb[\big]{\mu(u_\star)-\mu(u_t)}
  +
  \sum_{t=\tau+1}^n\roundb[\big]{\mu(u_t)-\mu(u_t^\star)}.
\]
The second term is bounded by \cref{lem:hard-term}, and the third by \cref{lem:easy-term}. Writing
$x_\delta=\log(1/\delta)$ and using $C_{1/4}=1+\sqrt3$, these two lemmas give
\[
\begin{aligned}
  R_n
  \le{}&
  \Delta\tau
  +4e(1+\sqrt3)\beta_{n-1}^\delta\nu_\star\sqrt n
  \sqrt{
    \frac{16x_\delta}{3\lambda}
    +
    \frac{8e^2d\ell_n}{3\nu_\star}
  }
  \\
  &\quad
  +e\sqrt2\,\nu_\star r_{\rm loc}\sqrt{nx_\delta}
  +e^3(4\Gamma_\delta+\sqrt e)\beta_{n-1}^\delta
  \sqrt{2\nu_\star nd\,\ell_n}.
\end{aligned}
\]
Using $\sqrt{a+b}\le\sqrt a+\sqrt b$ and $\Gamma_\delta\ge1$, the two terms involving
$\sqrt{\nu_\star nd\ell_n}$ are bounded by
\[
  \roundb[\Big]{4e^2\sqrt{\frac{8}{3}}(1+\sqrt3)+e^3\sqrt2(4+\sqrt e)}
  \beta_{n-1}^\delta\Gamma_\delta
  \sqrt{\nu_\star nd\,\ell_n}
  \le
  300\,\beta_{n-1}^\delta\Gamma_\delta
  \sqrt{\nu_\star nd\,\ell_n}.
\]
Similarly, the remaining terms are bounded by
\[
  \frac{16e(1+\sqrt3)}{\sqrt3}
  \beta_{n-1}^\delta\nu_\star
  \sqrt{\frac{nx_\delta}{\lambda}}
  +
  e\sqrt2\,\nu_\star r_{\rm loc}\sqrt{nx_\delta}
  \le
  80\,\nu_\star
  \roundb[\Big]{\frac{\beta_{n-1}^\delta}{\sqrt\lambda}+r_{\rm loc}}
  \sqrt{nx_\delta}.
\]
Combining the last two displays proves the regret bound.
\end{proof}

\printbibliography
\appendix

\clearpage
\section{Optimism}
\label{appendix:glm-optimism}

\optimismProp*

\begin{proof}
Fix $t>\tau$, and work on $\cE^\star$. If $x_\star=0$, then $0\in\cX$ and
\[
  u_t=\max_{x\in\cX}\langle\theta_t,x\rangle\ge0=u_\star,
\]
so the claim is immediate. Assume henceforth that $x_\star\ne0$.

By \cref{lem:warmup-mle} with $r=r_{\rm loc}$,
\[
  |\langle X_s,\hat\theta_{s-1}-\theta_\star\rangle|
  \le
  r_{\rm loc},
  \qquad
  \tau<s\le t-1.
\]
Since projection onto $[-B,B]$ is non-expansive and $u_s^\star\in[-B,B]$,
\[
  |\tilde u_s-u_s^\star|
  \le
  |\hat u_s-u_s^\star|
  \le
  r_{\rm loc},
  \qquad \tau<s\le t-1.
\]
Since $Mr_{\rm loc}\le1$, \cref{lem:sc} gives
\[
  \tilde\nu_s\le e\nu(u_s^\star),
  \qquad \tau<s\le t-1.
\]
For $s\le\tau$, $1/\kappa\le\nu(\langle\theta_\star,X_s\rangle)$. Therefore
\[
\begin{aligned}
  G_{t-1}
  &=
  \lambda I
  +\frac1\kappa\sum_{s=1}^{\tau}X_sX_s\tran
  +\sum_{s=\tau+1}^{t-1}\tilde\nu_sX_sX_s\tran
  \\
  &\preceq
  e\roundb[\Bigg]{
    \lambda I+\sum_{s=1}^{t-1}\nu(\langle\theta_\star,X_s\rangle)X_sX_s\tran
  }
  =
  eH_{t-1}^\star.
\end{aligned}
\]
Since $X_t$ maximises $\langle\theta_t,x\rangle$,
\[
\begin{aligned}
  u_t-u_\star
  &\ge
  \langle x_\star,\theta_t-\theta_\star\rangle
  \\
  &=
  \langle x_\star,\hat\theta_{t-1}-\theta_\star\rangle
  +
  \gamma\beta_{t-1}^\delta
  \langle x_\star,G_{t-1}^{-1/2}\eta_t\rangle.
\end{aligned}
\]
By Cauchy--Schwarz, the confidence event, and $G_{t-1}\preceq eH_{t-1}^\star$,
\[
  \langle x_\star,\hat\theta_{t-1}-\theta_\star\rangle
  \ge
  -\norm{x_\star}_{H_{t-1}^{\star-1}}
  \norm{\hat\theta_{t-1}-\theta_\star}_{H_{t-1}^\star}
  \ge
  -\sqrt e\,\beta_{t-1}^\delta\norm{x_\star}_{G_{t-1}^{-1}}.
\]
If $\beta_{t-1}^\delta\norm{x_\star}_{G_{t-1}^{-1}}=0$, the preceding display already gives $u_t\ge u_\star$. Otherwise define
\[
  Z_t
  =
  \frac{
    \langle x_\star,G_{t-1}^{-1/2}\eta_t\rangle
  }{
    \norm{x_\star}_{G_{t-1}^{-1}}
  }.
\]
Conditionally on $\cF_{t-1}$, $Z_t\sim\cN(0,1)$, and
\[
  u_t-u_\star
  \ge
  \beta_{t-1}^\delta\norm{x_\star}_{G_{t-1}^{-1}}
  \roundb[\big]{\gamma Z_t-\sqrt e}.
\]
Thus
\[
  \P{u_t\ge u_\star\mid\cF_{t-1}}
  \ge
  \P{Z_t\ge \sqrt e/\gamma\mid\cF_{t-1}}
  =
  \P{Z\ge \sqrt e/\gamma}
  \ge
  \P{Z\ge z_p}
  =
  p,
\]
where the penultimate inequality uses $\gamma\ge\sqrt e/z_p$.
\end{proof}

\clearpage
\section{Warm-up and localisation}
\label{appendix:glm-warmup}

The following argument is from Appendix B of \citet{janz2024evill}, itself based on Exercise 19.3 of \citet{lattimore2020bandit}.

\begin{proposition}[Greedy deterministic exploration]\label{prop:warmup-length}
Fix $a\in(0,1]$. Consider the deterministic routine that starts from $V_0=\lambda I$ and, while
$\max_{x\in\cX}\norm{x}_{V_t^{-1}}^2>a$, chooses
\[
  X_{t+1}\in\argmax_{x\in\cX}\norm{x}_{V_t^{-1}},
  \qquad
  V_{t+1}=V_t+X_{t+1}X_{t+1}\tran.
\]
Then the routine reaches $\max_{x\in\cX}\norm{x}_{V_t^{-1}}^2\le a$ after at most
\[
  \frac{4d}{a}
  \log\roundb[\bigg]{1+\frac{4d}{a\lambda}}
  +1
\]
rounds.
\end{proposition}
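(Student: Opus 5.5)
The plan is the standard elliptical-potential counting argument. Write $T$ for the stopping time of the routine: every round $t<T$ satisfies $\norm{X_{t+1}}_{V_t^{-1}}^2>a$. Since $X_{t+1}$ maximises $\norm{x}_{V_t^{-1}}$ over $\cX$, the quantity $\max_{x\in\cX}\norm{x}_{V_t^{-1}}^2$ is exactly $\norm{X_{t+1}}_{V_t^{-1}}^2$. So we have $T-1$ (at least) rounds on each of which the chosen vector has squared $V_t^{-1}$-norm exceeding $a$. I will upper bound the number of such rounds by comparing two estimates of $\log\det$.

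\textbf{Lower bound via the determinant identity.} By the rank-one determinant identity,
\[
\log\frac{\det V_k}{\det V_0}=\sum_{t=0}^{k-1}\log\roundb[\big]{1+\norm{X_{t+1}}_{V_t^{-1}}^2}.
\]
Each summand exceeds $\log(1+a)\ge a/2$, using $a\le1$ and $\log(1+z)\ge z/2$ on $[0,1]$. Hence, for $k$ rounds all of which were executed (that is, $k\le T$), we get $\log\det V_k/\det(\lambda I)> ka/2$.

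\textbf{Upper bound via AM--GM.} As in the proof of \cref{lem:telescope}, $\tr V_k\le\lambda d+k$ because $\cX\subset\Bd$. The arithmetic--geometric mean inequality then gives $\log\det V_k/\det(\lambda I)\le d\log(1+k/(\lambda d))$. Combining the two bounds, every executed round count $k$ satisfies
\[
k< \frac{2d}{a}\log\roundb[\big]{1+\frac{k}{\lambda d}}.
\]
We therefore apply the inequality to $k=T-1$ (or to $T$ itself), which accounts for the ``$+1$''.

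\textbf{Main obstacle: inverting the implicit inequality.} Write $c=2d/a$. The task is to show that $k<c\log(1+k/(\lambda d))$ forces $k\le 2c\log(1+2c/\lambda)$; note that $2c=4d/a$. I would argue by contradiction, using that $k\mapsto k-c\log(1+k/(\lambda d))$ is increasing once $k\ge c$. (The case $k< c$ is trivially within the bound, provided $\log(1+4d/(a\lambda))\ge 1/2$, which I would need to check or handle separately.) Substituting $k_0=2c\log(1+2c/\lambda)$ reduces the claim to
\[
2\log(1+2c/\lambda)\ge\log\roundb[\big]{1+\tfrac{2c}{\lambda d}\log(1+2c/\lambda)}.
\]
This follows from $(1+y)^2\ge 1+y\log(1+y)\ge 1+\frac{y}{d}\log(1+y)$ with $y=2c/\lambda$, since $\log(1+y)\le y$ and $d\ge1$. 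Any looseness here is absorbed by the generous constant 4, but pinning down the exact form of this inversion is where I expect to spend care.
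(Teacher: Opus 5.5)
Your skeleton matches the paper's: the rank-one determinant identity gives $\log\det V_k/\det(\lambda I)>ka/2$, AM--GM gives the upper bound $d\log(1+k/(\lambda d))$, and you then invert $k<c\log(1+k/(\lambda d))$ with $c=2d/a$. Your check that $f(k_0)\ge0$, for $f(k)=k-c\log(1+k/(\lambda d))$ and $k_0=2c\log(1+2c/\lambda)$, is also correct.

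The one genuine hole is the case you flagged. Monotonicity of $f$ on $[c,\infty)$ only carries $f(k_0)\ge0$ over to all $k>k_0$ when $k_0\ge c$, that is, when $\log(1+4d/(a\lambda))\ge 1/2$. This condition is false in general: with $d=a=1$ and $\lambda=10$ it equals $\log 1.4<1/2$. So it cannot simply be checked. As written, both the range $k<c$ and the range $k\ge c$ (when $k_0<c$) are left unhandled.

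It closes in one line. Since $\cX\subset\Bd$, $\max_{x\in\cX}\norm{x}_{V_0^{-1}}^2\le 1/\lambda$, so a first round is executed only if $\lambda<1/a$. Then $4d/(a\lambda)>4$ and $\log(1+4d/(a\lambda))>\log 5>1/2$. If no round is executed there is nothing to prove.

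Alternatively, $f$ is convex with $f(0)=0$, so $f(k)\ge (k/k_0)f(k_0)\ge 0$ for every $k\ge k_0$, with no case split.

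The paper avoids monotonicity altogether, in two steps:
\begin{itemize}
\item It uses the implicit inequality only at the single integer $m=\lceil k_0\rceil$, since running more than $m$ rounds means the first $m$ were run.
\item It verifies $\log(1+k/(\lambda d))\le ka/(2d)$ directly for any $k\ge k_0$. Setting $r=ka/(4d)\ge\log(1+4d/(a\lambda))$ and using $1+4r/(a\lambda)\le(1+r)(1+4/(a\lambda))$ gives $\log(1+k/(\lambda d))\le\log(1+r)+\log(1+4/(a\lambda))\le 2r$.
\end{itemize}
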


\begin{proof}
Set
\[
  m=
  \ceil*{
    \frac{4d}{a}
    \log\roundb[\bigg]{1+\frac{4d}{a\lambda}}
  }.
\]
If the routine has reached the design condition by time $m$, there is nothing to prove. Suppose instead that it has not. Then throughout rounds $1,\dots,m$ it chooses
\[
  X_s\in\argmax_{x\in\cX}\norm{x}_{V_{s-1}^{-1}},
\]
and the corresponding maximum is larger than $a$.

Let
\[
  g_t=\max_{x\in\cX}\norm{x}_{V_t^{-1}}^2.
\]
Since $V_t\succeq V_{t-1}$, the sequence $(g_t)$ is nonincreasing. Under the contradiction hypothesis,
$g_{s-1}>a$ for $s=1,\dots,m$, and hence
\[
  \norm{X_s}_{V_{s-1}^{-1}}^2=g_{s-1}>a.
\]
By the matrix determinant lemma,
\[
  \det(V_s)
  =
  \det(V_{s-1})
  \roundb[\big]{1+\norm{X_s}_{V_{s-1}^{-1}}^2}
  >
  (1+a)\det(V_{s-1}).
\]
Iterating,
\[
  \det(V_m)>\lambda^d(1+a)^m.
\]
On the other hand, $\cX\subset\Bd$ implies $\tr(V_m)\le\lambda d+m$, and so the arithmetic--geometric mean inequality yields
\[
  \det(V_m)
  \le
  \roundb[\Big]{\frac{\tr(V_m)}{d}}^d
  \le
  \roundb[\Big]{\lambda+\frac{m}{d}}^d.
\]
Combining the two determinant bounds,
\[
  m\log(1+a)
  <
  d\log\roundb[\bigg]{1+\frac{m}{\lambda d}}.
\]
Since $a\in(0,1]$, $\log(1+a)\ge a/2$, hence
\[
  \frac{ma}{2d}
  <
  \log\roundb[\bigg]{1+\frac{m}{\lambda d}}.
\]
Let $r=ma/(4d)$. By the definition of $m$,
\[
  r\ge \log\roundb[\bigg]{1+\frac{4d}{a\lambda}}.
\]
Since $d\ge1$,
\[
  \log\roundb[\bigg]{1+\frac{4}{a\lambda}}\le r.
\]
Therefore,
\[
  \log\roundb[\bigg]{1+\frac{m}{\lambda d}}
  =
  \log\roundb[\bigg]{1+\frac{4r}{a\lambda}}
  \le
  \log(1+r)
  +
  \log\roundb[\bigg]{1+\frac{4}{a\lambda}}
\le
  r+r
  =
  \frac{ma}{2d},
\]
contradicting the previous strict inequality. Hence the routine reaches the design condition by time $m$, and the claimed bound follows from
\[
  m\le
  \frac{4d}{a}
  \log\roundb[\bigg]{1+\frac{4d}{a\lambda}}
  +1.
\]
\end{proof}

\begin{lemma}\label{lem:warmup-mle}
Fix $r>0$. If
\[
  \max_{x\in\cX}\norm{x}_{V_\tau^{-1}}^2
  \le
  \frac{r^2}{\kappa(\beta_{n-1}^\delta)^2},
\]
with the right-hand side interpreted as $+\infty$ when $\beta_{n-1}^\delta=0$, then on $\cE^\star$, for every
$t>\tau$,
\[
  \sup_{x\in\cX}
  |\langle x,\hat\theta_{t-1}-\theta_\star\rangle|
  \le r.
\]
\end{lemma}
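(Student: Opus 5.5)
The plan is a Cauchy--Schwarz argument in the $H_{t-1}^\star$ geometry, followed by a comparison of $H_{t-1}^\star$ with the warm-up design matrix $V_\tau$. If $\beta_{n-1}^\delta=0$, then on $\cE^\star$ monotonicity gives $\beta_{t-1}^\delta=0$ for all $t\le n$. Hence $\hat\theta_{t-1}=\theta_\star$ and the claim is trivial. So assume $\beta_{n-1}^\delta>0$ and fix $t>\tau$ (with $t\le n$) and $x\in\cX$. On $\cE^\star$,
\[
  |\langle x,\hat\theta_{t-1}-\theta_\star\rangle|
  \le
  \norm{x}_{(H_{t-1}^\star)^{-1}}
  \norm{\hat\theta_{t-1}-\theta_\star}_{H_{t-1}^\star}
  \le
  \beta_{t-1}^\delta\norm{x}_{(H_{t-1}^\star)^{-1}}
  \le
  \beta_{n-1}^\delta\norm{x}_{(H_{t-1}^\star)^{-1}},
\]
where the last step uses that $(\beta_t^\delta)$ is nondecreasing.

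The key step is the matrix comparison $H_{t-1}^\star\succeq\frac1\kappa V_{t-1}\succeq\frac1\kappa V_\tau$. Each weight $\nu(\langle X_s,\theta_\star\rangle)$ is evaluated at a point with $|\langle X_s,\theta_\star\rangle|\le\norm{X_s}_2\norm{\theta_\star}_2\le b\le B$, since $\cX\subset\Bd$. By the definition of $\kappa$, every such weight is therefore at least $1/\kappa$. Combined with $\lambda I\succeq\frac1\kappa\lambda I$ (because $\kappa\ge1$), this gives
\[
  H_{t-1}^\star\succeq\frac1\kappa\roundb[\Big]{\lambda I+\sum_{s=1}^{t-1}X_sX_s\tran}=\frac1\kappa V_{t-1}.
\]
Since $t-1\ge\tau$, we also have $V_{t-1}\succeq V_\tau$. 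Inverting reverses the Loewner order, so $\norm{x}^2_{(H_{t-1}^\star)^{-1}}\le\kappa\norm{x}^2_{V_\tau^{-1}}$.

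Combining the two steps with the hypothesis yields
\[
  |\langle x,\hat\theta_{t-1}-\theta_\star\rangle|
  \le
  \beta_{n-1}^\delta\sqrt{\kappa}\,\norm{x}_{V_\tau^{-1}}
  \le
  \beta_{n-1}^\delta\sqrt\kappa\cdot\frac{r}{\sqrt\kappa\,\beta_{n-1}^\delta}
  =
  r.
\]
Taking the supremum over $x\in\cX$ completes the argument. No step is a real obstacle. The only points needing care are the $\beta_{n-1}^\delta=0$ convention and the fact that the weights in $H^\star$ are evaluated at true natural parameters lying in $[-b,b]$. This is what makes the uniform lower bound $1/\kappa$ valid without any localisation assumption.
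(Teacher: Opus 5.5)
Your proposal is correct and follows essentially the same route as the paper. Both arguments apply Cauchy--Schwarz in the $H_{t-1}^\star$ geometry, use the comparison $H_{t-1}^\star\succeq\frac1\kappa V_{t-1}\succeq\frac1\kappa V_\tau$ (from $\nu(\langle X_s,\theta_\star\rangle)\ge1/\kappa$ and $\kappa\ge1$), and then invoke the design hypothesis, with the degenerate case $\beta_{n-1}^\delta=0$ handled separately via positive definiteness of $H_{t-1}^\star$.
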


\begin{proof}
Fix $t>\tau$, and work on $\cE^\star$. If $\beta_{n-1}^\delta=0$, then
\[
  \norm{\hat\theta_{t-1}-\theta_\star}_{H_{t-1}^\star}=0,
\]
and the claim follows since $H_{t-1}^\star$ is positive definite. Hence assume $\beta_{n-1}^\delta>0$.

By monotonicity of $V_s^{-1}$,
\[
  \max_{x\in\cX}\norm{x}_{V_{t-1}^{-1}}^2
  \le
  \max_{x\in\cX}\norm{x}_{V_\tau^{-1}}^2.
\]
Since $|\langle\theta_\star,X_s\rangle|\le b$, we have $\nu(\langle\theta_\star,X_s\rangle)\ge1/\kappa$. Also $\kappa\ge1$, so
\[
  H_{t-1}^\star
  =
  \lambda I+\sum_{s=1}^{t-1}\nu(\langle\theta_\star,X_s\rangle)X_sX_s\tran
  \succeq
  \frac1\kappa
  \roundb[\Big]{\lambda I+\sum_{s=1}^{t-1}X_sX_s\tran}
  =
  \frac1\kappa V_{t-1}.
\]
Thus $H_{t-1}^{\star-1}\preceq\kappa V_{t-1}^{-1}$. For any $x\in\cX$,
\[
  |\langle x,\hat\theta_{t-1}-\theta_\star\rangle|
  \le
  \norm{x}_{H_{t-1}^{\star-1}}
  \norm{\hat\theta_{t-1}-\theta_\star}_{H_{t-1}^\star}
 \le
  \beta_{n-1}^\delta\sqrt\kappa\norm{x}_{V_{t-1}^{-1}}
  \le
  \beta_{n-1}^\delta\sqrt{\kappa}
  \sup_{z\in\cX}\norm{z}_{V_\tau^{-1}}
  \le r.
\]
Taking the supremum over $x\in\cX$ proves the claim.
\end{proof}

\begin{proposition}\label{prop:warmup-gaussian}
The event
\[
  \cE_\eta=
  \curlyb[\Big]{
    \max_{1\le t\le n}\norm{\eta_t}_2\le \Gamma_\delta
  }
\]
satisfies $\P{\cE_\eta}\ge1-\delta$.
\end{proposition}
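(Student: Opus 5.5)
The plan is to combine a dimension-free Gaussian norm concentration bound for each fixed $t$ with a union bound over the $n$ indices. The key tool is Gaussian concentration for Lipschitz functions \citep[Theorem~5.6]{boucheron2013concentration}. The map $\eta\mapsto\norm{\eta}_2$ is $1$-Lipschitz on $\Rd$, so for $\eta\sim\cN(0,I_d)$ and every $s\ge0$,
\[
  \P{\norm{\eta}_2-\E\norm{\eta}_2\ge s}\le e^{-s^2/2}.
\]
By Jensen's inequality, $\E\norm{\eta}_2\le\sqrt{\E\norm{\eta}_2^2}=\sqrt d$. Taking $s=\sqrt{2\log(n/\delta)}$ then gives, for each fixed $t$,
\[
  \P{\norm{\eta_t}_2>\sqrt d+\sqrt{2\log(n/\delta)}}\le \frac{\delta}{n}.
\]

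The second step is the union bound. Only indices $t\in\{\tau+1,\dots,n\}$ carry a sampled vector $\eta_t$, and $\tau$ is deterministic because the warm-up does not depend on the data. To keep notation uniform, I will extend the family to independent standard Gaussians $\eta_1,\dots,\eta_n$, setting the unused ones aside; this does not affect the algorithm. A union bound over the at most $n$ indices then yields
\[
  \P{\cE_\eta^c}\le n\cdot\frac{\delta}{n}=\delta,
\]
which is the claim.

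No step is a genuine obstacle. The only point needing care is to use the dimension-free Lipschitz concentration, with the mean bounded by $\sqrt d$, rather than a chi-squared tail bound, so that the threshold comes out exactly as $\Gamma_\delta=\sqrt d+\sqrt{2\log(n/\delta)}$ with no extra constants. If the citation to Gaussian concentration is unwanted, the Laurent--Massart bound would give the same form: $\norm{\eta}_2^2\le d+2\sqrt{dx}+2x\le(\sqrt d+\sqrt{2x})^2$ with probability at least $1-e^{-x}$, and taking $x=\log(n/\delta)$ recovers the threshold.
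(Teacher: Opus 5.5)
Your proposal is correct and follows the paper's proof. The paper also applies the per-index tail bound $\P{\norm{\eta_t}_2\ge\sqrt d+\sqrt{2x}}\le e^{-x}$ with $x=\log(n/\delta)$, and then takes a union bound over $\eta_1,\dots,\eta_n$, treated as sampled at the outset. The only difference is that you derive this norm bound explicitly from Lipschitz Gaussian concentration and Jensen's inequality (or Laurent--Massart), whereas the paper cites it as standard.
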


\begin{proof}
For notational convenience, take $\eta_1,\dots,\eta_n$ to be sampled at the outset, and use only those with $t>\tau$.
For $G\sim\cN(0,I_d)$, the standard Gaussian norm bound gives
\[
  \P{\norm{G}_2\ge\sqrt d+\sqrt{2x}}\le e^{-x},
  \qquad x>0.
\]
Taking $x=\log(n/\delta)$ gives $\P{\norm{G}_2\ge\Gamma_\delta}\le\delta/n$. The claim follows by a union bound over
$t=1,\dots,n$.
\end{proof}

\begin{lemma}\label{lem:warmup-perturbation}
Fix $r>0$. If
\[
  \max_{x\in\cX}\norm{x}_{V_\tau^{-1}}^2
  \le
  \frac{r^2}{\kappa\gamma^2(\beta_{n-1}^\delta)^2\Gamma_\delta^2},
\]
with the right-hand side interpreted as $+\infty$ when $\beta_{n-1}^\delta=0$, then on $\cE_\eta$, for every
$t>\tau$,
\[
  \sup_{x\in\cX}
  |\langle x,\theta_t-\hat\theta_{t-1}\rangle|
  \le r.
\]
\end{lemma}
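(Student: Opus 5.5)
The plan is to mirror the proof of \cref{lem:warmup-mle}, with the Gaussian perturbation in place of the estimation error. First dispose of the degenerate case. If $\beta_{n-1}^\delta=0$, then, since $(\beta_t^\delta)$ is nondecreasing and positive, this cannot actually occur. Even if one allows it, $\beta_{t-1}^\delta=0$ gives $\theta_t=\hat\theta_{t-1}$ and the claim is trivial. So assume $\beta_{n-1}^\delta>0$, fix $t>\tau$, and work on $\cE_\eta$.

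By the sampling rule, $\theta_t-\hat\theta_{t-1}=\gamma\beta_{t-1}^\delta G_{t-1}^{-1/2}\eta_t$. For any $x\in\cX$, Cauchy--Schwarz gives
\[
  |\langle x,\theta_t-\hat\theta_{t-1}\rangle|
  =\gamma\beta_{t-1}^\delta|\langle G_{t-1}^{-1/2}x,\eta_t\rangle|
  \le\gamma\beta_{t-1}^\delta\norm{x}_{G_{t-1}^{-1}}\norm{\eta_t}_2 .
\]
On $\cE_\eta$ we have $\norm{\eta_t}_2\le\Gamma_\delta$. Since $\beta^\delta$ is nondecreasing, $\beta_{t-1}^\delta\le\beta_{n-1}^\delta$.

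The remaining step, and the only one needing care, is to compare $G_{t-1}$ with $V_{t-1}$. The warm-up contributes $\frac1\kappa X_sX_s\tran$ for $s\le\tau$. For $\tau<s\le t-1$, the clipped curvature satisfies $\tilde\nu_s=\nu(\tilde u_s)$ with $\tilde u_s\in[-B,B]$, so $\tilde\nu_s\ge1/\kappa$ by the definition of $\kappa$. Together with $\lambda\ge\lambda/\kappa$ (as $\kappa\ge1$), this gives
\[
  G_{t-1}\succeq\frac1\kappa V_{t-1}.
\]
This bound is deterministic and, unlike the analogous steps elsewhere, does not require $\cE_{\rm loc}$; the clipping is exactly what makes it unconditional. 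Hence $\norm{x}_{G_{t-1}^{-1}}^2\le\kappa\norm{x}_{V_{t-1}^{-1}}^2\le\kappa\max_{z\in\cX}\norm{z}_{V_\tau^{-1}}^2$, using the monotonicity $V_{t-1}\succeq V_\tau$.

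Combining the pieces,
\[
  |\langle x,\theta_t-\hat\theta_{t-1}\rangle|\le\gamma\beta_{n-1}^\delta\Gamma_\delta\sqrt\kappa\,\max_{z\in\cX}\norm{z}_{V_\tau^{-1}}\le r
\]
by the hypothesis. Taking the supremum over $x\in\cX$ proves the claim. I expect no real obstacle; the main point to check is the curvature lower bound $\tilde\nu_s\ge1/\kappa$ holding without localisation, which follows from the clipping into $[-B,B]$.
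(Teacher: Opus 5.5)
Your proof is correct and follows the paper's argument essentially line for line. Both use Cauchy--Schwarz in the $G_{t-1}$ geometry, the clipping-based deterministic comparison $G_{t-1}\succeq\frac1\kappa V_{t-1}$, monotonicity of $V_t^{-1}$, and $\norm{\eta_t}_2\le\Gamma_\delta$ on $\cE_\eta$. Your side remark is also right: positivity of $(\beta_t^\delta)$ in the assumption rules out $\beta_{n-1}^\delta=0$, so the paper's degenerate-case branch is vacuous.
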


\begin{proof}
Fix $t>\tau$, $x\in\cX$, and work on $\cE_\eta$. If $\beta_{n-1}^\delta=0$, then
$\theta_t=\hat\theta_{t-1}$ and the claim is immediate. Hence assume $\beta_{n-1}^\delta>0$.

By monotonicity of $V_s^{-1}$,
\[
  \max_{x\in\cX}\norm{x}_{V_{t-1}^{-1}}^2
  \le
  \max_{x\in\cX}\norm{x}_{V_\tau^{-1}}^2.
\]
By the definition of the clipped curvature,
\[
  G_{t-1}
  \succeq
  \frac1\kappa V_{t-1}.
\]
Thus $G_{t-1}^{-1}\preceq\kappa V_{t-1}^{-1}$. Since
\[
  \theta_t-\hat\theta_{t-1}
  =
  \gamma\beta_{t-1}^\delta G_{t-1}^{-1/2}\eta_t,
\]
we obtain
\[
  |\langle x,\theta_t-\hat\theta_{t-1}\rangle|
  \le
  \gamma\beta_{t-1}^\delta
  \norm{x}_{G_{t-1}^{-1}}
  \norm{\eta_t}_2
  \le
  \gamma\beta_{n-1}^\delta\Gamma_\delta\sqrt\kappa\norm{x}_{V_{t-1}^{-1}}
  \le
  \gamma\beta_{n-1}^\delta\Gamma_\delta\sqrt\kappa
  \sup_{z\in\cX}\norm{z}_{V_\tau^{-1}}
  \le r.
\]
Taking the supremum over $x\in\cX$ proves the claim.
\end{proof}

\warmupLemma*

\begin{proof}
Let
\[
  \cE_{\rm loc}=\cE^\star\cap\cE_\eta.
\]
By \cref{prop:warmup-gaussian}, $\P{\cE_{\rm loc}}\ge1-2\delta$, and the Gaussian norm bound in the statement holds by definition of $\cE_\eta$.

Work on $\cE_{\rm loc}$, and fix $t>\tau$. The assumed design condition implies both
\[
  \max_{x\in\cX}\norm{x}_{V_\tau^{-1}}^2
  \le
  \frac{(r_{\rm loc}/2)^2}{\kappa(\beta_{n-1}^\delta)^2},
  \qquad
  \max_{x\in\cX}\norm{x}_{V_\tau^{-1}}^2
  \le
  \frac{(r_{\rm loc}/2)^2}{\kappa\gamma^2(\beta_{n-1}^\delta)^2\Gamma_\delta^2},
\]
again with the convention that the right-hand sides are $+\infty$ when $\beta_{n-1}^\delta=0$. Hence
\cref{lem:warmup-mle,lem:warmup-perturbation} apply with $r=r_{\rm loc}/2$. Therefore
\[
  \sup_{x\in\cX}
  |\langle x,\hat\theta_{t-1}-\theta_\star\rangle|
  \le
  \frac{r_{\rm loc}}{2},
  \qquad
  \sup_{x\in\cX}
  |\langle x,\theta_t-\hat\theta_{t-1}\rangle|
  \le
  \frac{r_{\rm loc}}{2}.
\]
Taking $x=X_t$ gives
\[
  |\hat u_t-u_t^\star|\le \frac{r_{\rm loc}}{2},
  \qquad
  |u_t-\hat u_t|\le \frac{r_{\rm loc}}{2}.
\]
Also,
\[
  \sup_{x\in\cX}|\langle x,\theta_t-\theta_\star\rangle|\le r_{\rm loc}.
\]
Since $X_t$ maximises $\langle\theta_t,x\rangle$ over $\cX$,
\[
  u_t
  =
  \langle X_t,\theta_t\rangle
  \ge
  \langle x_\star,\theta_t\rangle
  =
  u_\star+\langle x_\star,\theta_t-\theta_\star\rangle
  \ge
  u_\star-r_{\rm loc}.
\]
On the other hand,
\[
  u_t
  =
  u_t^\star+\langle X_t,\theta_t-\theta_\star\rangle
  \le
  u_\star+r_{\rm loc},
\]
because $u_t^\star\le u_\star$. Finally,
\[
  u_t^\star
  =
  u_t+\langle X_t,\theta_\star-\theta_t\rangle
  \ge
  u_t-r_{\rm loc}
  \ge
  u_\star-2r_{\rm loc}.
\]
This proves all claims.
\end{proof}

\end{document}